\newtheorem{theorem}{Theorem}[section]
\newtheorem{lemma}[theorem]{Lemma}
\newtheorem{proposition}[theorem]{Proposition}
\newtheorem{problem}[theorem]{Problem}
\theoremstyle{definition}
\newtheorem{definition}[theorem]{Definition}
\newtheorem{remark}[theorem]{Remark}
\newtheorem{observation}[theorem]{Observation}
\newtheorem{example}[theorem]{Example}
\newtheorem{thm}{Theorem}
\newcommand{\domain}{{\mathcal{X}}}
\newcommand{\group}{{\mathcal{Z}}}
\newcommand{\labelset}{{\mathcal{Y}}}
\newcommand{\advantaged}{{A}}
\newcommand{\disadvantaged}{{D}}
\newcommand{\distribution}{{\mathcal{P}}}
\newcommand{\score}{{\mathcal{S}}}
\newcommand{\cell}{{\mathcal{C}}}
\newcommand{\classifier}{f}
\newcommand{\prob}{\operatorname{Pr}}
\newcommand{\ex}{\mathbb E}
\newcommand{\repspace}{\mathcal{\Tilde{X}}}
\newcommand{\repfunc}{R}
\newcommand{\candidaterates}{\mathcal{I}}
\newcommand{\fFP}{f_r}
\newcommand{\fOpt}{f^{*}}
\newcommand{\RDP}{R_{DP}}
\newcommand{\RFP}{R_{PE}}
\newcommand{\loss}{\mathcal{L}}
\newcommand{\cost}{\alpha}
\newcommand{\positivityrate}{\pi}
\newcommand{\threshold}{\mathcal{T}}
\newcommand{\hypoclass}{\Tilde{\mathcal{H}}}
\newcommand{\hypoclassrep}{\Tilde{\mathcal{H}_R}}
\newcommand{\hypoclassdet}{\mathcal{H}}
\newcommand{\thresholdmass}{\Tilde{\mathcal{T}}}
\newcommand{\classifierDP}{f_r}
\newcommand{\accepted}{\mathcal{A}}
\newcommand{\rejected}{\mathcal{R}}
\newcommand{\costFR}{\text{CFR}}
\newcommand{\repfuncset}{R_{\text{fair}}}
\DeclareMathOperator*{\argmin}{argmin}
  \providecommand\BibTeX{{%
    \normalfont B\kern-0.5em{\scshape i\kern-0.25em b}\kern-0.8em\TeX}}}
\begin{document}

\title{On the Power of Randomization in Fair Classification and Representation}


\author{Sushant Agarwal}
\email{agarwal.sus@northeastern.edu}
\affiliation{%
  \institution{Northeastern University}
  \country{USA}
}

\author{Amit Deshpande}
\email{amitdesh@microsoft.com}
\affiliation{%
  \institution{Microsoft Research}
  \country{India}
  }

\renewcommand{\shortauthors}{Sushant Agarwal and Amit Deshpande}

\begin{abstract}
Fair classification and fair representation learning are two  important problems in supervised and unsupervised fair machine learning, respectively. Fair classification asks for a classifier that maximizes accuracy on a given data distribution subject to fairness constraints. Fair representation maps a given data distribution over the original feature space to a distribution over a new representation space such that all classifiers over the representation satisfy fairness. In this paper, we examine the power of randomization in both these problems to minimize the loss of accuracy that results when we impose fairness constraints. Previous work on fair classification has characterized the optimal fair classifiers on a given data distribution that maximize accuracy subject to fairness constraints, e.g., Demographic Parity (DP), Equal Opportunity (EO), and Predictive Equality (PE). We refine these characterizations to demonstrate when the optimal randomized fair classifiers can surpass their deterministic counterparts in accuracy. We also show how the optimal randomized fair classifier that we characterize can be obtained as a solution to a convex optimization problem. Recent work has provided techniques to construct fair representations for a given data distribution such that any classifier over this representation satisfies DP. However, the classifiers on these fair representations either come with no or weak accuracy guarantees when compared to the optimal fair classifier on the original data distribution. Extending our ideas for randomized fair classification, we improve on these works, and construct DP-fair, EO-fair, and PE-fair representations that have provably optimal accuracy and suffer no accuracy loss compared to the optimal DP-fair, EO-fair, and PE-fair classifiers respectively on the original data distribution.

\end{abstract}

\begin{CCSXML}
<ccs2012>
   <concept>
       <concept_id>10003752.10010070.10010071.10010074</concept_id>
       <concept_desc>Theory of computation~Unsupervised learning and clustering</concept_desc>
       <concept_significance>500</concept_significance>
       </concept>
   <concept>
       <concept_id>10010147.10010257.10010258.10010259.10010263</concept_id>
       <concept_desc>Computing methodologies~Supervised learning by classification</concept_desc>
       <concept_significance>500</concept_significance>
       </concept>
   <concept>
       <concept_id>10010147.10010257.10010293.10010319</concept_id>
       <concept_desc>Computing methodologies~Learning latent representations</concept_desc>
       <concept_significance>300</concept_significance>
       </concept>
 </ccs2012>
\end{CCSXML}

\ccsdesc[500]{Theory of computation~Unsupervised learning and clustering}
\ccsdesc[500]{Computing methodologies~Supervised learning by classification}
\ccsdesc[300]{Computing methodologies~Learning latent representations}

\keywords{fairness; demographic parity; equal opportunity; randomization; machine learning; classification; representation}

\maketitle

\section{Introduction}
With the proliferation of machine learning models in several domains such as banking, education, healthcare, law enforcement, customer service etc. with direct social and economic impact on individuals, it has become imperative to build these models ethically and responsibly to avoid amplification of their biases. Assessing and mitigating bias to sensitive or underprivileged demographic groups is an important aspect of building ethical and responsible machine learning models, and understandably, a large part of recent literature on this has focused on the fairness of machine learning models \cite{barocas2016big,barocas2019fairmlbook,chouldechova2020snapshot}.
 
Fairness in classification has been an important topic of study for machine learning models because it corresponds to automated decision-making scenarios in sensitive application domains. Fairness of classification in literature has been largely divided into two subtopics, focusing on either individual fairness \cite{dwork2012fairness,bechavod2020metric,fleisher2021what,petersen2021post}, or group fairness \cite{hardt2016equality,zafar2017fairnessbeyond}. Group-fair classification requires a classifier to achieve certain outcome metrics to be equal or near-equal across sensitive demographic groups (e.g., race, gender). Legal precedents on disparate impact and the four-fifths rule \cite{barocas2016big} have given rise to statistical definitions and quantitative metrics of group-fairness such as Demographic Parity (DP), Equal Opportunity (EO), and Predictive Equality (PE) \cite{hardt2016equality,zafar2017fairnessbeyond}. Literature on group-fair classification has looked at various trade-offs of different metrics of group-fairness with each other \cite{kleinberg2016inherent} as well as their trade-offs with desiderata such as accuracy, interpretability, and privacy \cite{agarwal2020trade,agarwal2021trade}. Given the inevitable trade-off between accuracy and fairness, it is natural to ask the theoretical question of finding the \emph{optimal fair classifier} on a given data distribution that maximizes accuracy subject to certain group-fairness constraints. Previous work on this problem has tried to characterize optimal group-fair classifiers and proposed methods to train such group-fair classifiers \cite{pmlr-v81-menon18a,celis2020classification,DBLP:conf/nips/ChzhenDHOP19,corbettdavies2018measure,hardt2016equality}, and proposed methods to train group-fair classifiers \cite{kamiran2012data, agarwal2018reductions, zafar2017fairnessconstraints}. The characterization has considered both group-aware and group-blind classifiers, based on whether the use of sensitive attributes or group membership information at inference time is allowed or prohibited.

Classification by machine learning models is a part of supervised learning. Machine learning models that perform unsupervised learning (e.g., representation learning, generative models, clustering) do not directly correspond to decision-making scenarios but they are equally important in modern machine learning suites and equally vulnerable to the hidden risks of amplifying social and economic harms \cite{papakyriakopoulos2020bias,choi2020fair}. The outcome-based definitions of group-fairness such as DP and EO do not immediately apply to unsupervised learning models for fair representation and fair generation. Previous work on building fair representations has tried to incorporate certain differentiable proxies for fairness and accuracy objectives to train fair representations \cite{creager2019flexibly,ferber2021differentiable,tan2020learning}. 

This approach does not immediately guarantee group-fairness of models trained on the representations. To circumvent this, it has been natural to define the objective of fair representation as finding an embedding or a representation map from the original data to a new representation, so that any downstream classifier on the new representation must satisfy group-fairness constraints. Subsequently, it reduces our objective to only finding the classifier of maximum accuracy on the above fair representation. Another practical motivation for the above definition of fair representation is that it supports the scenario where a data regulator (e.g., a government body) releases the data to data users (e.g., a recruitment agency) in such a way that any model or classifier built by the data users on this data cannot violate a desirable fairness guarantee.

The fairness-accuracy trade-off in fair classification implies that there is an accuracy loss (or price of fairness) when we maximize accuracy subject to fairness constraints, as opposed to unconstrained accuracy maximization. In fair representation, accuracy-maximizing classifiers built on fair representations suffer an accuracy loss compared to the optimal fair classifiers on the original data distribution because there is a potential information loss when we goes from the original data distribution to its fair representation. The focus of our paper is to investigate how randomization can help minimize the accuracy loss in the fair classification and fair representation problems stated above. We study these problems for three of the most popular group-fairness constraints (DP, EO, PE), in the group-aware setting where the classifier and the representation map are allowed to use the sensitive group membership.
\paragraph{Roadmap of the paper}
The paper is organized as follows.
Section \ref{sec:results} briefly summarises our main results. Then,
Section \ref{sec:RW} discusses related works in the literature on fair classification and fair representation. 
We introduce the setup in Section \ref{sec:PF}, and formally state the problems we study in this paper. Section \ref{classification} characterizes the optimal randomized DP-fair, EO-fair, and PE-fair classifiers, and shows how they can be obtained as the solution to a convex optimization problem. Section \ref{rep} constructively characterizes the optimal randomized DP-fair, EO-fair, and PE-fair representations, that have provably optimal accuracy, and suffer no accuracy loss compared to the optimal fair randomized classifier on the original data distribution. We conclude and talk about potential directions for future work in
Section \ref{conclusion}.

\subsection{Our Results}\label{sec:results}
Our main contributions are summarized below.
\begin{itemize}
    \item We mathematically characterize the \emph{optimal randomized fair classifier} that maximizes accuracy  on a given data distribution subject to group-fairness constraints (DP, EO, PE). We do this in the group-aware setting and prove that the optimal randomized group-fair classifier is a randomized threshold classifier that can be obtained as the solution to a convex optimization problem. We give an example of a data distribution to show that the accuracy of the optimal randomized group-fair classifier can surpass the accuracy of the optimal deterministic group-fair classifier. This is in contrast with the deterministic Bayes Optimal classifier that maximizes accuracy on a given data distribution without any fairness constraints, and is optimal among all deterministic as well as randomized classifiers on the distribution.
    \item Extending our ideas for randomized fair classification, we construct randomized fair representations for DP, EO, and PE constraints such that the optimal classifier on this representation incurs no \emph{cost of fair representation}, or in other words, no accuracy loss compared to the optimal fair randomized classifier on the original data distribution. This improves upon previous work on fair representations that primarily focuses on DP and gives no or weak provable guarantees for accuracy.
\end{itemize}

\subsection{Related Work}\label{sec:RW}

Given a joint data distribution on features and binary class labels, it is well-known that the optimal classifier is given by the Bayes Optimal classifier that applies a threshold of $1/2$ on the class-probability function, i.e., the probability of the class label being positive given the feature vector. A natural extension of this under fairness constraints is to characterize the optimal fair classifier on a given joint data distribution over features, class labels and group membership, that maximizes accuracy subject to group-fairness constraints. This question can be asked for both group-aware and group-blind classifiers, based on whether using a sensitive attribute is allowed or prohibited for inference. Previous work has studied this extensively: Given a predictive score function, Hardt et al.~\cite{hardt2016equality} characterize the optimal group-aware fair classifier for EO as a group-dependent threshold classifier on the predictive score value. Under the assumption that the class-probability function has a strictly positive density on $[0, 1]$, Corbett-Davies et al.~\cite{corbettcost} characterize the optimal group-aware fair classifiers for DP and PE as group-dependent threshold classifiers on the class-probability function. Menon and Williamson~\cite{pmlr-v81-menon18a} characterize the optimal group-blind (and group-aware) fair classifiers for DP and EO as instance-dependent (and group-dependent) threshold classifiers on the class-probability function. Celis et al.~\cite{celis2020classification} extend this characterization to optimal fair classifiers under multiple simultaneous fairness constraints. Chzhen et al.~(Proposition $2.3$ in \cite{DBLP:conf/nips/ChzhenDHOP19}) give an explicit expression for the optimal deterministic group-aware fair classifier for EO, describing it as a recalibrated Bayes classifier. Zhao et al.~\cite{zhao2019inherent} characterize the optimal group-aware fair classifier for DP using oracle access to the Bayes optimal classifiers on the underlying sensitive groups. Agarwal et al.~\cite{agarwal2018reductions} reduce fair classification to a sequence of cost-sensitive classifications, whose solutions give a randomized classifier of least empirical error subject to the desired fairness constraints. Their randomized classifier samples a random classifier from a given hypothesis class and then uses it to make the prediction. Our work refines the above results on the characterization of optimal fair classifiers to demonstrate when randomized fair classifiers have an advantage over deterministic fair classifiers. We further extend these ideas for optimal randomized fair classification to construct randomized fair representations that achieve optimal accuracy and suffer no accuracy loss compared to the optimal fair classifiers on the original data distribution. In recent concurrent work, Zeng et al.~\cite{https://doi.org/10.48550/arxiv.2202.09724} characterize the optimal fair randomized classifiers for DP, EO, and PE, however, they do not highlight the advantages of randomization, and do not consider the problem of fair representations. We note that our characterization results assume full access to the data distribution. In practice, this does not hold, and one only has access to a finite sample drawn from the distribution. To (approximately) realise our classifiers in practice, we can make use of techniques similar to the ones suggested in many of the above works, but we do not focus on this.

The outcome-based notions of fairness for classification need to be modified to apply to unsupervised learning problems such as representation learning. 

There is a long line of work on defining and constructing fair representations~ \cite{delbarrio2018obtaining,DBLP:conf/nips/ChzhenDHOP20,Silvia_Ray_Tom_Aldo_Heinrich_John_2020,feldman2015certifying, pmlr-v28-zemel13,mcnamara201costs,madras2018learning, ruoss2020learning,louizos2017variational, johndrow2017algorithm}. Feldman et al.~\cite{feldman2015certifying} construct DP-fair representations by introducing the paradigm of total repair, which involves transforming the original training dataset to a new dataset, such that the protected attribute is now independent of the class label. In particular, total repair involves mapping the group-wise conditional distributions from the original feature space to a common distribution in the new representation space, so that it is impossible to predict the protected attribute from the new sanitised dataset. This transformation is done in a manner that preserves as much of the information in the original dataset as possible. Many follow up works also give techniques for DP-fair representation that fall under the paradigm of total repair \cite{delbarrio2018obtaining, johndrow2017algorithm}. However, most of these works provide weak or no accuracy guarantees for the optimal classifier on the fair representation. Our construction for DP-fair representation also falls under the paradigm of total repair, but in contrast, we construct a fair representation such that the optimal classifier on this representation incurs no accuracy loss compared to the optimal fair classifier on the original data distribution. In addition, we also construct fair representations for notions of fairness apart from DP, such as EO and PE. Our representation technique works for any distribution over an arbitrary feature space, which does not always hold for previous techniques that use geometric properties of continuous feature spaces. McNamara et al.~\cite{mcnamara201costs} and Zhao et al.~\cite{zhao2019inherent} also study the cost of using a fair representation in terms of accuracy loss. McNamara et al.~\cite{mcnamara201costs} term this loss as the \emph{cost of mistrust}, which is similar to what we call the \emph{cost of fair representation}. We define the cost of fair representation as the difference between the accuracy of the optimal fair classifier on the original data distribution and the accuracy of the optimal classifier on the fair representation. McNamara et al.~\cite{mcnamara201costs} provide a closed form expression for the cost of mistrust, and also derive an upper bound on its value. Their result works for a fairness-regularized accuracy maximization objective that does not imply a bound on the cost of fair representation defined in our paper. 

Apart from the above results that are closest to our paper, there is also plenty of work on fair representation learning in practice \cite{pmlr-v28-zemel13, louizos2017variational, madras2018learning, creager2019flexibly}. Zemel et al.~\cite{pmlr-v28-zemel13} develop a heuristic technique for learning DP-fair representations, where they formulate fairness as an optimization problem of finding an intermediate data representation that best encodes the data, while removing any information about the sensitive group memberships. Their optimization problem has terms to encourage both these goals simultaneously. However, Zemel et al.~\cite{pmlr-v28-zemel13} and its follow-up works~ \cite{louizos2017variational, madras2018learning,creager2019flexibly} provide heuristics with no provable accuracy or fairness guarantees.

While we focus on the benefits of randomization, it is important to note that randomized decisions for fair machine learning can have practical limitations in real-world deployment, e.g., inconsistent decisions for the same individual when repeated, as discussed in Cotter et al.~\cite{NEURIPS2019_5fc34ed3}.

\section{Problem Formulation}\label{sec:PF}
\subsection{Setup}
Consider a learning problem, where we are given a distribution $\distribution$ over $\domain \times \group \times \labelset$, where $\group = \{\advantaged, \disadvantaged \}$ represents the protected group membership, $\domain$ represents all the other features,
and $\labelset = \{0,1\}$ represents the label set (we adopt the standard convention of associating the label $1$ with success or acceptance). Our results also hold when there are multiple groups, but for ease of exposition, we restrict our analysis to the case of 2 groups (advantaged group $A$, and disadvantaged group $D$). A deterministic group-aware classification rule $f$ is a map
\begin{equation*}
    \classifier: \domain \times \group \rightarrow \{0,1\},
\end{equation*}
that assigns a binary label to each point (also known as feature vector) in the feature space $\domain \times \group$. We denote the hypothesis class of all such deterministic functions $f$ by $\hypoclassdet$.

The aim of the learning problem is to find a classifier in the hypothesis class that minimises a loss function. We consider the standard 0-1 loss function $\ell$, whose expected value is given by
\begin{equation}\label{loss}
  \loss(f) =  \ex[\ell(f)] =  \prob[f(X,Z)\neq Y],
\end{equation}
where the probability is over $(X,Z,Y)\sim\distribution$.
\begin{remark}
Henceforth, all probabilities will be over $(X,Z,Y)\sim\distribution$, unless explicitly stated.
\end{remark}
Our results also work for the more general loss function $\ell_\cost$, known in literature as cost-sensitive risk \cite{pmlr-v81-menon18a}, that assigns a weight $\cost$ to False Positive errors, and a weight $(1-\cost)$ to False Negative errors. The expected loss is a weighted linear combination of the False Positive rate (FPR) and False Negative rate (FNR), where
\begin{align*}
\text{FPR}(\classifier) & = \prob[f(X,Z)=1 ~|~ Y = 0] \quad \text{and} \\
\text{FNR}(\classifier) & = \prob[f(X,Z)=0 ~|~ Y = 1].
\end{align*}
We want to minimize the expected loss, given by
\begin{equation*}
    \loss_\cost(\classifier) = \ex[\ell_\cost(f)] = \cost(1-\positivityrate) \text{FPR}(\classifier) + (1-\cost)\positivityrate \text{FNR}(\classifier),
\end{equation*}
where $\positivityrate = \prob[Y=1]$. By assigning equal weight to False Positive errors and False Negative errors by setting $\cost = \frac{1}{2}$, we recover the standard 0-1 loss function, given by Equation \ref{loss}.
\begin{remark}
If $\cost$ is not mentioned, it is assumed to be $\frac{1}{2}$. Our results hold for any $0 < \cost < 1$, but for ease of exposition, we often restrict our analysis to $\cost = \frac{1}{2}$.
\end{remark}

In the standard 0-1 loss setting, the optimal classifier $\fOpt$

(the Bayes Optimal Classifier) is given by
\begin{equation*}
    \fOpt(x,z)= \threshold_{\frac{1}{2}}(\prob[Y=1 ~|~ X=x, Z=z]),
\end{equation*}
and $\threshold_\gamma(\beta)$ is the threshold function that outputs 1 if $\beta \geq \gamma$, and 0 otherwise. The optimal classifier for cost-sensitive risk $\ell_\cost$ has a very similar form \cite{corbettdavies2018measure}, and is given by 
\begin{equation*}
    \fOpt_\alpha(x,z)= \threshold_\cost(\prob[Y=1 ~|~ X=x, Z=z]).
\end{equation*}
The only difference is that, instead of thresholding the success probability for a point at $\frac{1}{2}$, we threshold at $\cost$. 

In many situations we may want to consider a fairness-aware learning problem, that aims to find the most accurate hypothesis in $\hypoclassdet$ that is also group-fair, and does not discriminate based on the protected group feature $\group$. We consider a few popular notions of fairness, that we define below.
\begin{definition}[Demographic Parity]
$f$ satisfies Demographic Parity with selection rate $r$, if
\begin{equation*}
   \prob[f(X,Z)=1 ~|~ Z = A] = \prob[f(X,Z)=1 ~|~ Z = D] = r.
\end{equation*}
\end{definition}
\begin{definition}[Predictive Equality, or Equal FPR]
$f$ satisfies Predictive Equality if the false positive rates are equal for both groups, i.e.,
\begin{equation*}
    \prob[f(X,Z)=1 ~|~ Y=0, Z = A] = \prob[f(X,Z)=1 ~|~ Y=0, Z = D].
\end{equation*}
\end{definition}
\begin{definition}[Equal Opportunity, or Equal FNR \cite{hardt2016equality}]
$f$ satisfies Equal Opportunity if the false negative rates are equal for both groups, i.e.,
\begin{equation*}
    \prob[f(X,Z)=0 ~|~ Y=1, Z = A] = \prob[f(X,Z)=0 ~|~ Y=1, Z = D].
\end{equation*}
\end{definition}
\begin{remark}
Classifiers satisfying these notions of fairness will be referred to as DP-fair, PE-fair, and EO-fair respectively. Our results hold for all three fairness notions. However, we can obtain analogous results for EO from the same proof techniques as that of PE. Hence, we only discuss the results for PE. In addition, some previous works also consider equal True Positive rate (TPR), and equal True Negative rate (TNR), as notions of fairness. However, obtaining equal TPR is equivalent to obtaining equal FNR, and obtaining equal TNR is equivalent to obtaining equal FPR, and hence we do not separately discuss these two notions. 
\end{remark}

\subsection{Randomized Fair Classification Problem}
We now consider a relaxation of deterministic classifiers, and allow a classifier to be randomized. A randomized classification rule $f$ is a function
\begin{equation*}
    \classifier: \domain \times \group \rightarrow [0,1],
\end{equation*}
where $\classifier(x,z)$ denotes the probability of $(x,z) \in \domain \times \group$ being mapped to $1$. We denote the hypothesis class of all such randomized functions $f$ by $\hypoclass$. The notions of loss and fairness previously defined for the deterministic case can be easily modified for the randomized case. We investigate whether the ability to use randomization could lead to more powerful classifiers, and consequently study the following problem.
\begin{problem}\label{prob1}
What is the most accurate classifier in $\hypoclass$ satisfying a particular notion of fairness?
\end{problem}
Note that ability to use randomization can only lead to more powerful classifiers, because $\hypoclassdet \subsetneq \hypoclass$. In particular, a randomized classification rule can map a feature vector $(x,z)$ to 1 with probability $\classifier(x,z)$, and to 0 with probability $(1-\classifier(x,z))$. In contrast, a deterministic classifier can either choose to map $(x,z)$ strictly to 1, or strictly to 0, which is captured in the randomized setting by setting $\classifier(x,z)=1$, or $\classifier(x,z)=0$, respectively. To see how randomization can improve the accuracy of fair classifiers, let us look at an example.
\begin{example}\label{eg}
Consider the following distribution $\distribution$ over $\domain \times \group \times \labelset$, where $\domain = \{x_1, x_2\}$, $\group = \{\advantaged, \disadvantaged \}$, and $\labelset = \{0,1\}$.

\begin{equation*}
    \distribution(x_1, \advantaged, 1) = \frac{3}{8}, \distribution(x_1, \advantaged, 0) = \frac{1}{8}, \distribution(x_1, \disadvantaged, 1) = \frac{1}{8}, \distribution(x_1, \disadvantaged, 0) = \frac{1}{8}
\end{equation*}
\begin{equation*}
    \distribution(x_2, \advantaged, 1) = 0, \distribution(x_2, \advantaged, 0) = 0, \distribution(x_2, \disadvantaged, 1) = 0, \distribution(x_2, \disadvantaged, 0) = \frac{1}{4}.
\end{equation*}
There are only 2 deterministic classifiers satisfying DP, either the constant 1 classifier $f_1$, or the constant 0 classifier $f_0$, with $\loss(f_1) = \loss(f_0) = \frac{1}{2}$. On the other hand, consider the following randomized classifier $f$, where
\begin{equation*}
    f(x_1, \advantaged) = \frac{1}{2}, f(x_1, \disadvantaged) = 1, f(x_2, \advantaged) = f(x_2, \disadvantaged) = 0.
\end{equation*} 
It is easy to see that $f$ satisfies DP, and $\loss(f) = \frac{3}{8}$, hence improving the accuracy of the optimal fair deterministic classifiers $f_0$ and $f_1$.
\end{example}

\subsection{Randomized Fair Representation Problem}
A common way used to obtain fair classifiers is to pre-process the data by mapping the data from the original domain space to a new representation space. Essentially, a data regulator modifies the data before the data user is allowed to train a model on it. The goal of the data regulator is to sanitise the data such that models trained on it are fair, while also maintaining predictive accuracy by preserving as much information of the original data set as possible. A data representation is a randomized mapping
\begin{equation*}
    \repfunc: \domain \times \group \rightarrow \repspace,
\end{equation*}
for some representation space $\repspace$. A deterministic classification rule $g$ over this representation is a map $g: \repspace \rightarrow \{0,1\}$,
that assigns a binary label to each point in the new representation space $\repspace$. For a fixed $R$, we define the following hypothesis class $\hypoclassrep = \{g \circ R ~|~ g \text{ is a deterministic classifier over } R \}$.

A data regulator wants to ensure that the classifier trained by the data user on the sanitised data is fair. However, the data user could be adversarial, and actively try and be unfair. Hence, the data regulator needs to ensure that every classifier over the representation is fair, and this motivates the following definition of a fair representation. 
\begin{definition}[Fair Representation]
A representation $R$ is fair if every $h \in \hypoclassrep$ is fair.
\end{definition}
Note that for any $R, \hypoclassrep \subseteq \hypoclass$, and hence the accuracy of the optimal classifier in $\hypoclass$ cannot be lesser than the accuracy of the optimal classifier in $\hypoclassrep$. Similarly, if the representation $R$ is fair, then the accuracy of the optimal fair classifier in $\hypoclass$ cannot be lesser than the accuracy of the optimal classifier in $\hypoclassrep$. Hence, the fairness of the representation may come at a cost, and we define the following quantity below, that measures the difference in the accuracy of the optimal fair classifier in $\hypoclass$, and the optimal classifier in $\hypoclassrep$.

\begin{definition}[Cost of Fair Representation $R$ (denoted by $\costFR(R)$)]
\begin{equation*}
\costFR(R) = \min_{h \in \hypoclassrep}\loss(h) - \min_{f \in \hypoclass}\loss(f).
\end{equation*}
\end{definition}
Consequently, we focus on the following problem.
\begin{problem}\label{prob2}
Let $\repfuncset$ be the set of fair representations. What is the representation $R' \in \repfuncset$ that yields the minimal cost of fair representation? 

\end{problem}
\begin{remark}
 For simplicity of analysis in later proofs, we only consider deterministic classifiers over the randomized representations. It is easy to see that allowing for randomized classifiers over randomized representations provides no additional power, and does not affect the minimum possible cost of fair representation, as the randomness of the optimal classifier on a given representation could be incorporated in the representation itself.
\end{remark}
\section{Characterizing the Randomized Fair Classifier of Maximum Accuracy}\label{classification}
We now introduce some notation and terminology that we will need later on.
\begin{definition}[Cell \cite{kleinberg2019simplicity}]
Consider a randomized partition of the feature space $\domain \times \group$ into multiple disjoint components. We call these components cells, and denote a cell by $\cell$. 
\end{definition}
A natural partition we will encounter is the following, where each feature vector $(x,z) \in \domain \times \group$ is a separate cell, denoted by $\cell_{x,z}$. We consider randomized partitions, i.e., a feature vector $(x,z)$ does not have to strictly lie within one cell, but could be divided into multiple cells. For example, feature vector $(x,z)$ could be mapped to a cell $\cell_1$ with probability $\gamma(x,z)$, and to another cell $\cell_2$ with probability $(1-\gamma(x,z))$.
\begin{definition}[Score \cite{kleinberg2019simplicity}]
The score $\score$ of a cell $\cell$ is the probability that a random instance drawn from $\distribution$, given that it lies in $\cell$, has label $1$, i.e.,
\begin{equation*}
    \score(\cell) = \prob[Y=1~|~(X, Z) \in \cell].
\end{equation*}
\end{definition}

Previously, we encountered classifiers that threshold based on success probabilities (i.e., scores), such as the Bayes Optimal classifier. We formally define them below.
\begin{definition}[Score Threshold Classifiers]
Given $t \in [0,1]$, a score-threshold classifier $\threshold_t$ maps $\cell_{x,z}$ to 1 if $\score(\cell_{x,z}) \geq t$, and to 0 otherwise. 
\end{definition}

Randomized classifiers give us the ability to threshold by probability mass, instead of just thresholding by scores. To explain this better, we introduce the notion of group-wise sorted cells.
\begin{definition}[Group-wise Sorted Cells]
Define $\cell_z = \bigcup\limits_{x \in \domain} \cell_{x,z}$, where the component cells of $\cell_\advantaged$ and $\cell_\disadvantaged$ are arranged in descending order of scores $\score$. If two or more cells from the same group have the same score, merge them. This gives us cells sorted in strictly decreasing order of scores.
\end{definition}
By $\cell_z(t)$, denote the topmost cells of $\cell_z$ comprising of $t$ fraction of the total probability mass of $\cell_z$, i.e., $\frac{\distribution(\cell_z(t))}{\distribution(\cell_z)} = t.$
Note that this may involve splitting a cell into 2 parts randomly. For example, in Example \ref{eg}, $\cell_{\advantaged}(\frac{1}{2})$ would involve splitting $\cell_{x_1,\advantaged}$ into two equal parts randomly. However, in the deterministic setting, only $\cell_{\advantaged}(0)$ and $\cell_{\advantaged}(1)$ are defined, and $\cell_{\advantaged}(\frac{1}{2})$ does not make sense. Hence, in the deterministic setting, we cannot in general threshold by probability mass for an arbitrary threshold value. 

\begin{definition}[Mass Threshold Classifiers] Let the instances in the feature space accepted and rejected by $f$ be denoted by $\accepted(f)$ and $\rejected(f)$ respectively.
By $\thresholdmass_t$, we denote the mass-threshold classifier such that $\accepted(\thresholdmass_t) = \bigcup\limits_{z \in \group} \cell_z(t)$, and $\rejected(\thresholdmass_t) = \bigcup\limits_{z \in \group}(\cell_z - \cell_z(t))$.
\end{definition}
For example, in Example \ref{eg}, the randomised classifier $f$ we constructed is actually the randomized mass-threshold classifier $\thresholdmass_{\frac{1}{2}}$. Note that a mass-threshold classifier needs to use randomization on (at most) only one cell in each group, and is deterministic on all other cells.
\subsection{Characterizing the Optimal Randomized DP-Fair Classifier}

Here, we answer Problem \ref{prob1} for DP, with the following characterisation result.
\begin{thm}\label{thm:characDP}
There exists $r' \in [0,1]$, such that the most accurate DP-fair classifier in $\hypoclass$ is given by the mass-threshold classifier $\thresholdmass_{r'}$.
\end{thm}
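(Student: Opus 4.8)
The plan is to split the optimization into an inner and an outer problem: first fix the common selection rate $r$ and identify the optimal DP-fair classifier that achieves it, then optimize over $r \in [0,1]$. Note that, by definition of DP, every DP-fair classifier has a well-defined common selection rate $r \in [0,1]$, so this partitions the feasible set.

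For the \emph{inner problem} I would fix $r \in [0,1]$ and let $f \in \hypoclass$ be any DP-fair classifier with selection rate $r$, i.e.\ $\prob[f(X,Z)=1 \mid Z = \advantaged] = \prob[f(X,Z)=1 \mid Z = \disadvantaged] = r$. Writing $q(\cell) \in [0,1]$ for the probability that $f$ accepts a cell $\cell$, the first step is the additive decomposition $\loss(f) = \sum_{\cell}\distribution(\cell)\,\score(\cell) + \sum_{z}\sum_{\cell \subseteq \cell_z} q(\cell)\,\distribution(\cell)\,(1-2\,\score(\cell))$, which holds because a piece of mass $m$ with score $s$ accepted with probability $p$ contributes $m\bigl(s(1-p) + (1-s)p\bigr) = ms + mp(1-2s)$ to the 0-1 loss, and the first sum does not depend on $f$. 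The DP constraint says exactly that, within each group $z$, the accepted mass $\sum_{\cell \subseteq \cell_z} q(\cell)\,\distribution(\cell)$ equals $r\,\distribution(\cell_z)$. So, per group, this is a fractional-knapsack problem: choose $q(\cdot)$ on $\cell_z$ with accepted mass $r\,\distribution(\cell_z)$ minimizing $\sum_{\cell \subseteq \cell_z} q(\cell)\,\distribution(\cell)\,(1-2\,\score(\cell))$. Since the coefficient $1-2\,\score(\cell)$ is strictly decreasing in $\score(\cell)$, an exchange argument — shift acceptance probability from a lower-score cell to a higher-score one, keeping the total accepted mass fixed — shows the optimum accepts exactly the top-$r$ fraction of the mass of $\cell_z$, namely $\cell_z(r)$ (splitting a boundary cell if needed, which is legal because randomization is allowed; equal-score cells are absorbed by the merging built into the group-wise sorted cells). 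Applying this to each of $z \in \{\advantaged,\disadvantaged\}$ yields exactly $\thresholdmass_r$, so $\thresholdmass_r$ minimizes $\loss$ among all DP-fair classifiers with selection rate $r$.

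For the \emph{outer problem} I would set $g(r) = \loss(\thresholdmass_r)$. By the inner step every DP-fair classifier in $\hypoclass$ has loss at least $g(r)$ for its selection rate $r$, so $\min_{f \text{ DP-fair}} \loss(f) = \min_{r \in [0,1]} g(r)$. As $r$ increases, $\cell_z(r)$ varies continuously (with $\distribution(\cell_z(r)\,\triangle\,\cell_z(r')) \to 0$ as $r' \to r$), hence $g$ is continuous on the compact interval $[0,1]$; in fact $g$ is piecewise linear, with breakpoints at the cumulative masses of the sorted cells of $\cell_\advantaged$ and $\cell_\disadvantaged$, and since the marginal score of the mass being added decreases with $r$ it is moreover convex. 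A continuous function on $[0,1]$ attains its minimum, say at $r'$, and then $\thresholdmass_{r'}$ is the most accurate DP-fair classifier in $\hypoclass$.

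I expect the main obstacle to be making the exchange/rearrangement step of the inner problem fully rigorous in the randomized, possibly non-atomic setting — formalizing ``moving acceptance mass'', splitting the boundary cell, and treating equal-score cells correctly — rather than any computation; the additive decomposition of the loss and the compactness argument for the existence of $r'$ are routine. The identical reasoning with threshold $\cost$ in place of $1/2$ (coefficient $\cost - \score(\cell)$) extends everything to the general cost-sensitive loss $\loss_\cost$.
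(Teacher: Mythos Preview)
Your proposal is correct and follows the same two-step skeleton as the paper: fix the common selection rate $r$ and show $\thresholdmass_r$ is optimal among DP-fair classifiers with that rate (the paper isolates this as Lemma~\ref{lemmaDP}, after first checking in Proposition~\ref{propDP} that $\thresholdmass_r$ is DP-fair), then minimize over $r\in[0,1]$.

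The inner step is argued a bit differently. The paper writes the loss as $\loss(f)=\distribution(\accepted(f))(1-\score(\accepted(f)))+\distribution(\rejected(f))\,\score(\rejected(f))$ and, using that $\distribution(\accepted(f_r))=\distribution(\accepted(\thresholdmass_r))=r$, reduces to showing $\score(\accepted(\thresholdmass_r))>\score(\accepted(f_r))$ and $\score(\rejected(\thresholdmass_r))<\score(\rejected(f_r))$ directly from the sorted-cell structure. You instead use the per-cell additive decomposition $\loss(f)=\mathrm{const}+\sum_{\cell} q(\cell)\,\distribution(\cell)\,(1-2\score(\cell))$ and recognize the groupwise problem as a fractional knapsack solved by the standard exchange argument. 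Both are the same rearrangement idea; your version makes the linear-programming structure more transparent and handles ties and boundary splitting slightly more cleanly, while the paper's version is shorter once the sorted cells are in place. For the outer step you explicitly argue continuity on $[0,1]$ to get existence of $r'$; the paper simply writes $r'\in\argmin_{r}\loss(\thresholdmass_r)$ here and establishes the piecewise-linear, convex shape of $r\mapsto\loss(\thresholdmass_r)$ only in the subsequent Theorem~\ref{thm:convexDP}.
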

\begin{proof}
We start by proving Proposition \ref{propDP}, and use it to prove Lemma \ref{lemmaDP}, which in turn implies Theorem \ref{thm:characDP}.
\begin{proposition}\label{propDP}
For any $r \in [0,1], \thresholdmass_{r}$ is DP-fair.
\end{proposition}
\begin{proof}
Let the elements of a cell $\cell$ accepted and rejected by $f$ be denoted by $\accepted(f(\cell))$ and $\rejected(f(\cell))$ respectively. For both $z=\advantaged$, and $z=\disadvantaged$, $\accepted(\thresholdmass_{r}(\cell_z))=\cell_z(r)$, and $\rejected(\thresholdmass_{r}(\cell_z)) = \cell_z - \cell_z(r)$.

Selection rate of $\thresholdmass_{r}$ on group $z$ is given by 
\begin{equation*}
    \frac{\distribution(\accepted(\thresholdmass_{r}(\cell_z)))}{\distribution(\cell_z)} = \frac{\distribution(\cell_z(r))}{\distribution(\cell_z)}.
\end{equation*}
And by definition of $\cell_z(r)$,
\begin{equation*}
    \frac{\distribution(\cell_z(r))}{\distribution(\cell_z)} = r.
\end{equation*}
Hence $\thresholdmass_{r}$ has the same selection rate (i.e., $r$) for both groups, and satisfies Demographic Parity.
\end{proof}
We then proceed to prove Lemma \ref{lemmaDP}.
\begin{lemma}\label{lemmaDP}
The most accurate DP-fair classifier in $\hypoclass$ with a fixed selection rate $r \in [0,1]$ is given by $\thresholdmass_r$.
\end{lemma}
\begin{proof}
Consider an arbitrary classifier with selection rate $r$, denoted by $\classifierDP$, such that $\classifierDP \neq \thresholdmass_r$. We will now argue that $\loss(\thresholdmass_r) < \loss(\classifierDP)$.

We know that the (expected) loss is the (weighted) sum of false positive and false negative errors. Hence,
\begin{equation*}
    \loss(f) = \distribution(\accepted(f)) (1- \score(\accepted(f))) +  \distribution(\rejected(f)) \score(\rejected(f)).
\end{equation*}
Since we know that 
\begin{equation*}
    \distribution(\accepted(f_r)) = \distribution(\accepted(\thresholdmass_r)) = r, \text{ and }
    \distribution(\rejected(f_r)) = \distribution(\rejected(\thresholdmass_r)) = 1 - r,
\end{equation*}
it suffices to show that
\begin{equation*}
    \score(\accepted(\thresholdmass_r)) > \score(\accepted(\classifierDP)), \text{ and }
    \score(\rejected(\thresholdmass_r)) < \score(\rejected(\classifierDP)).
\end{equation*}
We prove the former first. We can prove the latter with exactly the same reasoning, and hence omit the details. It suffices to show that
\begin{equation*}
    \score(\accepted(\thresholdmass_r(\cell_z))) > \score(\accepted(\classifierDP(\cell_z))), \text{ if }
    \accepted(\thresholdmass_r(\cell_z)) \neq \accepted(\classifierDP(\cell_z)).
\end{equation*}
To show the above, it is enough to observe that
\begin{equation*}
    \score(\accepted(\thresholdmass_r(\cell_z)) - (\accepted(\thresholdmass_r(\cell_z)) \cap \accepted(\classifierDP(\cell_z)))) >
\end{equation*}
\begin{equation*}
    \score(\accepted(\classifierDP(\cell_z)) - (\accepted(\thresholdmass_r(\cell_z)) \cap \accepted(\classifierDP(\cell_z)))).
\end{equation*}
This is easy to see, because 
\begin{equation*}
    (\accepted(\thresholdmass_r(\cell_z)) - (\accepted(\thresholdmass_r(\cell_z)) \cap \accepted(\classifierDP(\cell_z)))) \subseteq \cell_z(r), 
\end{equation*}
and 
\begin{equation*}
    (\accepted(\classifierDP(\cell_z)) - (\accepted(\thresholdmass_r(\cell_z)) \cap \accepted(\classifierDP(\cell_z)))) \subseteq (\cell_z - \cell_z(r)),
\end{equation*}
and given $P \subseteq \cell_z(r), Q \subseteq (\cell_z - \cell_z(r))$, such that $P \cap Q = \phi$, then $\score(P) > \score(Q)$, because the component cells of $\cell_z$ are sorted in strictly descending order of scores.

This completes the proof of the lemma.
\end{proof}
Armed with this lemma, it is easy to see that the most accurate classifier in $\hypoclass$ satisfying Demographic Parity is given by $\thresholdmass_{r'}$, where $r'$ denotes the optimum selection rate, i.e., \begin{equation*}
    r' \in \argmin_{r \in [0,1]}\loss(\thresholdmass_r),
\end{equation*}
which completes the proof.
\end{proof}
\subsubsection{Finding the Optimal Randomized DP-Fair Classifier by Convex Optimization}
Having characterized the optimal classifier, we now discuss how to find it as the solution to a convex optimization problem. The main result is the following theorem. 
\begin{thm}\label{thm:convexDP}
$\loss(\thresholdmass_r)$ is a convex, piecewise linear, continuous function over $[0,1]$. The optimal classifier $\thresholdmass_{r'}$ as in Theorem \ref{thm:characDP}, where
\begin{equation*}
    r' \in \argmin_{r \in [0,1]}\loss(\thresholdmass_r),
\end{equation*}
 is therefore the solution to a convex optimisation problem.
\end{thm}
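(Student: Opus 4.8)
The plan is to analyze the function $r \mapsto \loss(\thresholdmass_r)$ directly using the decomposition of the loss into false-positive and false-negative contributions that was already used in the proof of Lemma~\ref{lemmaDP}. Recall that for $\cost = \tfrac12$ (the general $\cost$ case is identical up to reweighting),
\begin{equation*}
    \loss(\thresholdmass_r) = \distribution(\accepted(\thresholdmass_r))\,\bigl(1 - \score(\accepted(\thresholdmass_r))\bigr) + \distribution(\rejected(\thresholdmass_r))\,\score(\rejected(\thresholdmass_r)),
\end{equation*}
and $\distribution(\accepted(\thresholdmass_r)) = r$, $\distribution(\rejected(\thresholdmass_r)) = 1-r$. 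The key observation is that $\distribution(\accepted(\thresholdmass_r))\,\score(\accepted(\thresholdmass_r)) = \prob[(X,Z) \in \accepted(\thresholdmass_r), Y=1]$, so if we write $g(r) := \prob[(X,Z) \in \accepted(\thresholdmass_r),\, Y = 1]$ for the ``true positive mass'' accepted at threshold $r$, then
\begin{equation*}
    \loss(\thresholdmass_r) = r - g(r) + (\positivityrate - g(r)) = r + \positivityrate - 2g(r),
\end{equation*}
since the total positive mass is $\positivityrate = \prob[Y=1]$ and the rejected positive mass is $\positivityrate - g(r)$. Thus $\loss(\thresholdmass_r)$ is affine in $r$ minus $2g(r)$, and the whole claim reduces to showing that $g(r)$ is \emph{concave}, piecewise linear, and continuous on $[0,1]$.

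Next I would establish concavity and piecewise linearity of $g$ from the structure of mass-threshold classifiers. Decompose $g(r) = g_\advantaged(r) + g_\disadvantaged(r)$ where $g_z(r) = \prob[(X,Z) \in \cell_z(r),\, Y=1]$ is the positive mass in the top-$r$-fraction of group $z$'s sorted cells; it suffices to prove each $g_z$ is concave, piecewise linear, and continuous, since a sum of such functions is again such. Because the component cells of $\cell_z$ are sorted in strictly decreasing order of scores and $\cell_z(r)$ grows by absorbing cells (splitting the last one) as $r$ increases, within the $r$-interval corresponding to any single cell $\cell$ the set $\cell_z(r)$ grows linearly in mass and its incremental positive mass accrues at the constant rate $\score(\cell) \cdot \distribution(\cell_z)$ per unit $r$. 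Hence $g_z$ is piecewise linear with breakpoints at the cumulative mass fractions of the sorted cells, and continuous there (the mass added vanishes as the split shrinks to zero). The slopes of consecutive linear pieces are $\distribution(\cell_z)$ times the successive scores, which are \emph{strictly decreasing} by the sorting; a piecewise linear function with non-increasing slopes is concave. Therefore $g_z$ is concave, and so is $g$.

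Finally I would assemble the pieces: $\loss(\thresholdmass_r) = r + \positivityrate - 2g(r)$ is the sum of an affine function and $-2$ times a concave, piecewise linear, continuous function, hence convex, piecewise linear, and continuous on $[0,1]$. Minimizing a convex piecewise-linear function over the interval $[0,1]$ is a convex optimization problem (indeed a one-dimensional linear program, and the minimizer can be taken at a breakpoint, i.e., at one of finitely many candidate selection rates when $\domain$ is finite), which yields the claimed $r' \in \argmin_{r \in [0,1]} \loss(\thresholdmass_r)$ and completes the proof. The main subtlety — the only place where care is needed — is the concavity argument for $g_z$: one must correctly identify the breakpoints with the cumulative group-wise mass fractions, verify continuity at each breakpoint (so that cell-splitting introduces no jumps), and check that the slope of each linear piece is exactly $\distribution(\cell_z)$ times the corresponding score so that the strict sorting of scores gives strictly decreasing slopes and hence concavity. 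Everything else is bookkeeping.
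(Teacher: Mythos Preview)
Your proof is correct and follows the same underlying idea as the paper's (piecewise-linear with slopes tied to cell scores, monotone scores $\Rightarrow$ monotone slopes $\Rightarrow$ convexity), but you organize it through a different and somewhat cleaner decomposition. The paper works directly with the loss on the intervals between consecutive points of $\candidaterates = \candidaterates_\advantaged \cup \candidaterates_\disadvantaged$, defining $\cell_i = \accepted(\thresholdmass_{r_i}) - \accepted(\thresholdmass_{r_{i-}})$ (a slab spanning \emph{both} groups) and computing the slope of $\loss(\thresholdmass_r)$ there as $1 - 2\score(\cell_i)$; convexity then follows once one argues that $\score(\cell_i)$ is strictly decreasing in $i$, which requires observing that the weighted average of the two group-wise scores drops whenever either group crosses a boundary. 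You instead rewrite $\loss(\thresholdmass_r) = r + \positivityrate - 2g(r)$ and push all the work into proving $g = g_\advantaged + g_\disadvantaged$ is concave; by handling each $g_z$ separately, the strict monotonicity of the slopes is immediate from the group-wise sorting, and you never need to reason about the mixed score of the combined slab $\cell_i$. Your route is a bit more modular and makes the breakpoint structure (breakpoints of $g_z$ are exactly $\candidaterates_z$, so breakpoints of $\loss$ lie in $\candidaterates_\advantaged \cup \candidaterates_\disadvantaged = \candidaterates$) more transparent; the paper's route keeps everything phrased in terms of the loss itself and directly yields the slope formula $1 - 2\score(\cell_i)$ that it reuses in the case analysis and in Observation~\ref{DPboundloss}.
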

We first introduce some terminology that will be needed in the upcoming proofs.
\begin{definition}[Score boundaries]
Consider the component cells of groupwise sorted $\cell_\advantaged$ and $\cell_\disadvantaged$. Then,
\begin{equation*}
    \candidaterates = \candidaterates_A \cup \candidaterates_D, \text{ where }
    \candidaterates_z = \{0\} \cup \{r ~|~ \exists \cell_z^i \in \cell_z ~|~ \cell_z^{\leq i} = \cell_z(r) \},
\end{equation*}

and $\cell_z^{\leq i}$ denotes the topmost cells of $\cell_z$ comprising of all cells until and including $\cell_z^i$. $\candidaterates_z$ essentially consists of all the boundary points in $\cell_z$, where the component cells change score.
\end{definition}
We now proceed with the proof.
\begin{proof}
We start by showing that $\loss(\thresholdmass_r)$ is piecewise linear and continuous. We do this by proving that the loss function is linear between any two points in the set $\candidaterates$.

Consider any $r_i \in \candidaterates = \{0, r_a, r_b,\ldots,1\}$ in sorted order, and define a corresponding cell $\cell_i$, where

\begin{equation*}
   \cell_i = \accepted(\thresholdmass_{r_i}) - \accepted(\thresholdmass_{r_{i-}}),
\end{equation*}
and $r_{i-}$ denotes the element in $\candidaterates$ preceding $r_{i}$.

Consider any $k \geq 0$, such that, $r_{i-} + k \leq r_{i}$. We know that
\begin{equation*}
    \score(\accepted(\thresholdmass_{r_{i-}+k}) - \accepted(\thresholdmass_{r_{i-}})) = \score(\cell_i),
\end{equation*}
because the cell $\cell_i$ is split into two parts at random, and hence both parts have the same score as $\cell_i$.

We know that loss is the (weighted) sum of false positive and false negative errors. Hence,
\begin{equation*}
    \loss(\thresholdmass_{(r_{i-}+k)}) = \loss(\thresholdmass_{r_{i-}}) + k (1-\score(\cell_i)) - k \score(\cell_i)
\end{equation*}
\begin{equation*}
    = \loss(\thresholdmass_{r_{i-}}) + k(1- 2\score(\cell_i)).
\end{equation*}
Looking at the form of the expression of the loss as above, it is easy to see that the loss function is linear in the interval $[r_{i-}, r_{i}]$, for any $r_i$, and is therefore piecewise linear and continuous over $[0,1]$.

We now proceed to prove that the loss function is convex. We begin by observing the following.
\begin{observation}\label{DPconvex}
Consider the interval between $[r_{i-}, r_{i}]$, for any $r_i \in \candidaterates$. The slope of the loss function $\loss(\thresholdmass_{r})$ at any point in this interval is $(1-2\score(\cell_i))$, and hence
\begin{enumerate}
    \item $\loss(\thresholdmass_{r})$ is strictly decreasing if $\score(\cell_i) > \frac{1}{2}$, and the decrease is steeper if the score is higher. 
    \item $\loss(\thresholdmass_{r})$ is strictly increasing if $\score(\cell_i) < \frac{1}{2}$, and the increase is steeper if the score is lower.
    \item $\loss(\thresholdmass_{r})$ is constant if $\score(\cell_i) = \frac{1}{2}$. 
\end{enumerate}
\end{observation}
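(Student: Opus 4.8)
The plan is to read the observation straight off the linear expression for $\loss(\thresholdmass_r)$ on the interval $[r_{i-}, r_i]$ that was just derived. Recall that for any $k \geq 0$ with $r_{i-} + k \leq r_i$ we showed
\[
  \loss(\thresholdmass_{r_{i-}+k}) = \loss(\thresholdmass_{r_{i-}}) + k\bigl(1 - 2\,\score(\cell_i)\bigr),
\]
where $\cell_i = \accepted(\thresholdmass_{r_i}) - \accepted(\thresholdmass_{r_{i-}})$; crucially, since $\cell_i$ is subdivided by random splitting, every sub-piece of it has the same score $\score(\cell_i)$, so the slope above does not depend on which $k$ (equivalently, which point $r = r_{i-}+k$ of the interval) we pick. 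Viewing $\loss(\thresholdmass_r)$ as an affine function of $r$ on $[r_{i-},r_i]$, its slope is therefore the constant $1 - 2\,\score(\cell_i)$, which is the first assertion of the observation.

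For the three itemized claims I would simply analyse the scalar $1 - 2s$ for $s = \score(\cell_i) \in [0,1]$: it is strictly decreasing in $s$, vanishes exactly at $s = \tfrac12$, is negative for $s > \tfrac12$, and positive for $s < \tfrac12$. Hence on $[r_{i-},r_i]$ the function $\loss(\thresholdmass_r)$ is strictly decreasing when $\score(\cell_i) > \tfrac12$, strictly increasing when $\score(\cell_i) < \tfrac12$, and constant when $\score(\cell_i) = \tfrac12$; and since $|1-2s|$ grows as $s$ moves away from $\tfrac12$, a higher score yields a steeper decrease and a lower score a steeper increase.

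I do not expect any real obstacle here — the observation is essentially a repackaging of the computation already carried out, and the only subtlety worth stating explicitly is that $\cell_i$ has a single well-defined score (justifying "the slope at any point in this interval"), which is exactly the random-splitting property used to prove piecewise linearity. The point of isolating this as an observation is its role in the remainder of the proof of Theorem~\ref{thm:convexDP}: because the component cells of the group-wise sorted $\cell_\advantaged$ and $\cell_\disadvantaged$ are in strictly decreasing order of score, the scores $\score(\cell_i)$ are non-increasing in $i$, so the slopes $1 - 2\,\score(\cell_i)$ are non-decreasing in $i$; together with the continuity already established, this is precisely convexity of the piecewise-linear map $r \mapsto \loss(\thresholdmass_r)$, from which the "convex optimisation problem" conclusion follows.
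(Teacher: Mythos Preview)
Your proposal is correct and matches the paper's approach exactly: the observation is read off directly from the linear expression $\loss(\thresholdmass_{r_{i-}+k}) = \loss(\thresholdmass_{r_{i-}}) + k(1 - 2\score(\cell_i))$ derived just before it, with the three items being the immediate sign analysis of the slope $1 - 2\score(\cell_i)$. The paper in fact offers no separate argument for the observation beyond stating it after that formula, so your write-up simply makes explicit what the paper leaves as evident.
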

Because the component cells of $\cell_z$ are sorted in strictly descending order of scores, it is easy to see that, for every $r_i$, $\score(\cell_{i}) < \score(\cell_{i-})$.
Hence, there exists a cell $\cell_{i'}$, such that for every $r_j, r_k \in \candidaterates$ satisfying $r_j < r_{i'} < r_k$, one of the following cases hold.
\begin{enumerate}
    \item $\score(\cell_{j}) > \frac{1}{2} > \score(\cell_{i'}) > \score(\cell_{k})$.
    \item $\score(\cell_{j}) > \score(\cell_{i'}) > \frac{1}{2} > \score(\cell_{k})$.
    \item $\score(\cell_{j}) > \score(\cell_{i'}) = \frac{1}{2} > \score(\cell_{k})$.
\end{enumerate}
In case (1), the loss function is strictly decreasing with shrinking rate of decrease uptil $r_{i'-}$, and strictly increasing with growing rate of increase after $r_{i'-}$. 
In case (2), the loss function is strictly decreasing with shrinking rate of decrease uptil $r_{i'}$, and strictly increasing with growing rate of increase after $r_{i'}$. 
In case (3), the loss function is strictly decreasing with shrinking rate of decrease uptil $r_{i'-}$, constant between $r_{i'-}$ and $r_{i'}$, and strictly increasing with growing rate of increase after $r_{i'}$. In each case, it is easy to see that the loss function is convex.

This concludes the proof.
\end{proof}
\begin{remark}
We note that the optima is unique if there exists no cell $\cell_i$, such that $\score(\cell_i) = \frac{1}{2}$.
\end{remark}
In addition, to find the optimum selection rate $r'$, we just need to evaluate $\loss(\thresholdmass_r)$ for all $r \in \candidaterates$, instead of for all $r \in [0,1]$, as a corollary of Observation \ref{DPboundloss}. This can drastically reduce search time, for example in the case where the distribution $\distribution$ is discrete.
\begin{observation}\label{DPboundloss}
\begin{equation*}
    \min_{r \in \candidaterates}\loss(\thresholdmass_r) = \min_{r \in [0,1]}\loss(\thresholdmass_r).
\end{equation*}
\end{observation}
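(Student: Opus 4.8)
The plan is to obtain this as an immediate corollary of the piecewise‑linear and continuous structure of $\loss(\thresholdmass_r)$ that was established in the proof of Theorem \ref{thm:convexDP}. Recall from there that $\candidaterates$ contains $0$ and $1$, and that on each interval $[r_{i-}, r_i]$ between consecutive elements of $\candidaterates$ the map $r \mapsto \loss(\thresholdmass_r)$ is affine, with $\loss(\thresholdmass_{r_{i-}+k}) = \loss(\thresholdmass_{r_{i-}}) + k\,(1 - 2\,\score(\cell_i))$ for $0 \le k \le r_i - r_{i-}$. These intervals cover all of $[0,1]$.

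First I would fix one such interval $[r_{i-}, r_i]$. An affine function on a closed interval attains its minimum at one of the two endpoints, so $\min_{r \in [r_{i-}, r_i]} \loss(\thresholdmass_r) = \min\{\loss(\thresholdmass_{r_{i-}}),\, \loss(\thresholdmass_{r_i})\}$, and both $r_{i-}$ and $r_i$ lie in $\candidaterates$. Taking the minimum over all the intervals and using that $[0,1]$ is their union yields $\min_{r \in [0,1]} \loss(\thresholdmass_r) \ge \min_{r \in \candidaterates} \loss(\thresholdmass_r)$. The reverse inequality is trivial since $\candidaterates \subseteq [0,1]$, so the two minima coincide, which is exactly the claim.

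I do not anticipate any genuine obstacle here: the statement essentially repackages structural facts already in hand. The one point worth a remark is that the argument above reads most cleanly when $\candidaterates$ is finite, so that ``minimum over all intervals'' is literally a minimum; this holds whenever $\distribution$ induces finitely many distinct cell scores in each group (in particular for discrete $\distribution$, the case of interest for the search‑time remark). In the general case one can instead appeal to the convexity proved in Theorem \ref{thm:convexDP}: a minimizer of the continuous convex function on the compact set $[0,1]$ exists, and were it interior to a maximal linear piece, that piece would be flat, so an endpoint of the piece — which lies in $\candidaterates$ (or its closure) — would be a minimizer as well, and a limiting argument then places a minimizer in $\candidaterates$.
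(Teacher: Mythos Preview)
Your proposal is correct and follows essentially the same approach as the paper: both reduce to showing that on each interval $[r_{i-},r_i]$ between consecutive points of $\candidaterates$ the loss is affine (equivalently, monotone via Observation~\ref{DPconvex}), so the minimum on that interval is attained at an endpoint in $\candidaterates$. The paper's proof is a one-line appeal to Observation~\ref{DPconvex}; your extra discussion of the finite versus general case is a welcome elaboration but not required by the paper.
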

\begin{proof}
Consider any $r_i \in \candidaterates$. Enough to show that for any $r$ such that $r_{i-} < r < r_{i}$, either $\loss(\thresholdmass_{r_i}) \leq \loss(\thresholdmass_{r})$, or $\loss(\thresholdmass_{r_{i-}}) \leq \loss(\thresholdmass_{r})$, which follows directly from Observation \ref{DPconvex}.
\end{proof}
\subsection{Characterizing the Optimal Randomized PE-Fair Classifier}
In the previous section on Demographic Parity, we considered mass-threshold classifiers $\thresholdmass_t$, that select $\cell_z(t)$, and reject $\cell_z - \cell_z(t)$, for both $z=\advantaged$, and $z=\disadvantaged$. $\thresholdmass_t$ applies the same threshold $t$ to both groups $\advantaged$ and $\disadvantaged$. In this section, we consider groupwise mass-threshold classifiers $\thresholdmass_{t_\advantaged,t_\disadvantaged}$ that apply different thresholds $t_\advantaged$ and $t_\disadvantaged$ to groups $\advantaged$ and $\disadvantaged$ respectively.
\begin{definition}[Groupwise Mass-Threshold Classifier]
A groupwise mass-threshold classifier $\thresholdmass_{t_\advantaged,t_\disadvantaged}$ is defined such that $\accepted(\thresholdmass_{t_\advantaged,t_\disadvantaged}(\cell_z)) = \cell_z(t_z)$, for both $z=\advantaged$, and $z=\disadvantaged$.
\end{definition}
Denote the False Positive rate of a classifier $\classifier$ restricted to a cell $\cell$ by $\text{FPR}(\classifier(\cell))$. Given $r \in (0,1]$, there is a unique classifier $\thresholdmass_{t_\advantaged,t_\disadvantaged}$, such that $\text{FPR}(\thresholdmass_{t_\advantaged,t_\disadvantaged}(\cell_\advantaged)) = \text{FPR}(\thresholdmass_{t_\advantaged,t_\disadvantaged}(\cell_\disadvantaged)) = r$. Denote this classifier by $\fFP$. Given $r=0$, $\thresholdmass_{t_\advantaged,t_\disadvantaged}$ need not be unique as there could exist cells with score $1$. In that case, we define $f_0$ to be the unique groupwise mass-threshold classifier accepting exactly the cells with score $1$.

Now, we answer Problem \ref{prob1} for PE, with the following characterisation result.
\begin{thm}\label{thm:characEO}
There exists $r' \in [0,1]$, such that the optimal PE-fair classifier in $\hypoclass$ is given by the groupwise mass-threshold classifier $f_{r'}$.
\end{thm}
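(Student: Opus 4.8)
The plan is to mirror the proof of Theorem~\ref{thm:characDP}. \textbf{Step 1 (PE-fairness of $\fFP$).} As an analogue of Proposition~\ref{propDP}, observe that $\fFP$ is PE-fair for every $r \in [0,1]$: for $r>0$ this is immediate since $\fFP$ was \emph{defined} to be the groupwise mass-threshold classifier with $\text{FPR}(\fFP(\cell_\advantaged)) = \text{FPR}(\fFP(\cell_\disadvantaged)) = r$, and for $r=0$ it holds because $f_0$ accepts exactly the score-$1$ cells of each group, which carry no negative mass. \textbf{Step 2 (optimality at a fixed FPR).} Prove the analogue of Lemma~\ref{lemmaDP}: among all PE-fair classifiers in $\hypoclass$ whose common false positive rate equals a fixed $r$, the classifier $\fFP$ has strictly smallest loss unless the classifier already coincides with $\fFP$. \textbf{Step 3 (optimize over $r$).} Every PE-fair $\classifier$ has some common false positive rate $r = \text{FPR}(\classifier(\cell_\advantaged)) = \text{FPR}(\classifier(\cell_\disadvantaged)) \in [0,1]$, so $\loss(\classifier) \ge \loss(\fFP) \ge \loss(f_{r'})$ for $r' \in \argmin_{r \in [0,1]} \loss(f_r)$; since $f_{r'}$ is itself PE-fair, it is the optimal PE-fair classifier, which is the claim.

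For Step~2 I would first decompose the loss across groups. Using
\begin{align*}
\text{FPR}(\classifier) &= \sum_{z} \prob[Z=z \mid Y=0]\,\text{FPR}(\classifier(\cell_z)), \\
\text{FNR}(\classifier) &= \sum_{z}\prob[Z=z\mid Y=1]\,\text{FNR}(\classifier(\cell_z)),
\end{align*}
the requirement ``PE-fair with common FPR $r$'' fixes $\text{FPR}(\classifier(\cell_z)) = r$ in each group separately, hence fixes the $\text{FPR}$ term of $\loss_\cost$; minimizing $\loss_\cost$ then reduces to minimizing $\text{FNR}(\classifier(\cell_z))$ --- equivalently, maximizing the positive mass accepted --- within each group independently, subject to accepting an $r$ fraction of that group's negative mass. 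The claimed maximizer is $\accepted(\fFP(\cell_z)) = \cell_z(t_z)$, the top cells by score. This follows by the same exchange argument as in Lemma~\ref{lemmaDP}, with one extra ingredient: a cell of score $s$ has positive-to-negative mass ratio $s/(1-s)$, which is strictly increasing in $s$. Comparing $\fFP$ on $\cell_z$ with any other acceptance rule $g$ of the same restricted FPR $r$, the two accepted regions carry equal negative mass, so the two symmetric-difference pieces carry equal negative mass; but the piece exclusive to $\fFP$ lies in $\cell_z(t_z)$ while the piece exclusive to $g$ lies in $\cell_z - \cell_z(t_z)$, so the former has (weakly, and strictly whenever the two rules genuinely differ on $\cell_z$) higher scores, hence more positive mass per unit of negative mass. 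Thus $\fFP$ accepts at least as much positive mass, strictly more unless $g$ coincides with $\fFP$ on $\cell_z$, which gives the lemma.

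The main obstacle relative to the DP case is that the binding constraint here is on the accepted \emph{negative} mass rather than on the total accepted mass, so ``top cells by score'' is no longer literally ``top cells by mass''; the bridge is precisely the observation that sorting cells by score also sorts them by positive-to-negative ratio, which is what makes the exchange step go through. A secondary point to treat carefully is the degenerate behaviour at $r=0$ (score-$1$ cells carry no negative mass, which is why $f_0$ has to be pinned down by a separate convention) together with the fact that, for $r>0$, the induced restricted FPR is a continuous and eventually strictly increasing function of the groupwise threshold $t_z$, so that $t_\advantaged, t_\disadvantaged$ --- and hence $\fFP$ --- exist and are unique, as already asserted before the theorem statement. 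Finally, the EO case follows verbatim by interchanging the roles of the labels $0$ and $1$ (and thus $\text{FPR} \leftrightarrow \text{FNR}$), so no separate argument is needed.
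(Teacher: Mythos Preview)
Your proposal is correct and takes essentially the same approach as the paper: establish the analogue of Lemma~\ref{lemmaDP} (that $\fFP$ is optimal among PE-fair classifiers with common false positive rate $r$) via a groupwise argument that fixes the FP contribution and shows the threshold rule minimizes FN, then optimize over $r$. The only cosmetic difference is in the exchange step --- the paper argues that a non-threshold acceptance region has lower aggregate score, hence (for equal FP mass) smaller total accepted mass, hence larger rejected mass with higher score, hence more false negatives --- which is equivalent to your ratio $s/(1-s)$ formulation.
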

\begin{proof}
We start by first proving Lemma \ref{lemmaEO}, which implies Theorem \ref{thm:characEO}.
\begin{lemma}\label{lemmaEO}
The most accurate PE-fair classifier in $\hypoclass$ with a false positive rate $r \in [0,1]$ is given by $\fFP$.
\end{lemma}
\begin{proof}
Consider an arbitrary classifier $f \neq \fFP$ with false positive rate $r$ for both groups. We will now argue that $\loss(\fFP) < \loss(f)$.

Let the loss of a classifier $f$ restricted to a cell $\cell$ be denoted by $\loss(f(\cell))$.
Enough to show that
\begin{equation*}
    \loss(f(\cell_z)) > \loss(\fFP(\cell_z)), \text{ if }
    \accepted(f(\cell_z)) \neq \accepted(\fFP(\cell_z)).
\end{equation*}
Denote the FP's and FN's of a classifier $f$ on cell $\cell$ by $\text{FP}(\classifier(\cell))$ and $\text{FN}(\classifier(\cell))$ respectively. We know that loss is the sum of false positive and false negative errors. Hence, we need to show that
\begin{equation*}
   \distribution (\text{FP}(f(\cell_z))) + \distribution (\text{FN}(f(\cell_z))) > \distribution (\text{FP}(\fFP(\cell_z))) + \distribution (\text{FN}(\fFP(\cell_z))).
\end{equation*}
Because
\begin{equation*}
   \distribution (\text{FP}(f(\cell_z))) = \distribution (\text{FP}(\fFP(\cell_z))),
\end{equation*}
it is enough to show that
\begin{equation*}
  \distribution (\text{FN}(f(\cell_z))) > \distribution (\text{FN}(\fFP(\cell_z))).
\end{equation*}
Using the fact that the cells of $\cell_z$ are sorted in strictly decreasing order of scores, it is easy to see that
\begin{equation*}
   \score(\accepted(f(\cell_z))) < \score(\accepted(\fFP(\cell_z))).
\end{equation*}
We know that,
\begin{equation*}
   \distribution( \text{FP}(f(\cell_z))) = \distribution(\text{FP}(f(\accepted(f(\cell_z))))) 
\end{equation*}
\begin{equation*}
    = \distribution(\accepted(f(\cell_z))) (1 - \score(\accepted(f(\cell_z)))).
\end{equation*}
Hence,
\begin{equation*}
    \score(\accepted(f(\cell_z))) < \score(\accepted(\fFP(\cell_z)))
\end{equation*}
\begin{equation*}
    \implies
    \distribution(\accepted(f(\cell_z))) < \distribution(\accepted(\fFP(\cell_z)))
\end{equation*}
\begin{equation*}
 \implies  \distribution(\rejected(f(\cell_z))) > \distribution(\rejected(\fFP(\cell_z))).
\end{equation*}
Because the cells of $\cell_z$ are sorted in strictly decreasing order of scores,
\begin{equation*}
    \score(\rejected(f(\cell_z))) > \score(\rejected(\fFP(\cell_z))).
\end{equation*}
Combining the above, we get that
\begin{equation*}
  \distribution (\text{FN}(f(\cell_z))) > \distribution (\text{FN}(\fFP(\cell_z))).
\end{equation*}
This completes the proof of the lemma.
\end{proof}
Armed with this lemma, it is easy to see that the most accurate PE-fair classifier in $\hypoclass$ is given by $f_{r'}$, where $r'$ denotes the optimum false positive rate, i.e.,
\begin{equation*}
        r' \in \argmin_{r \in [0,1]}\loss(\fFP).
\end{equation*}
This completes the proof of the theorem.
\end{proof}
\subsubsection{Finding the Optimal Randomized PE-Fair Classifier by Convex Optimization} After characterizing the optimal classifier, we discuss how to find it  as the solution to a convex optimization problem. The main result is the below theorem.
\begin{thm}\label{thm:convexEO}
$\loss(f_r)$ is a convex, piecewise linear, continuous function over $[0,1]$. Hence,
the optimal classifier $f_{r'}$ in Theorem \ref{thm:characEO}, where
\begin{equation*}
    r' \in \argmin_{r \in [0,1]}\loss(f_r),
\end{equation*}
is given by the solution to a convex optimisation problem.
\end{thm}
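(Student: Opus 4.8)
The plan is to follow the template of the proof of Theorem~\ref{thm:convexDP}, adapting it to the fact that $\fFP$ is parametrized by the common false positive rate rather than by accepted mass. First I would introduce the PE-analogue of the set $\candidaterates$ of score boundaries: for each group $z$, let $\candidaterates_z$ consist of $0$ together with every false positive rate attainable by accepting an initial segment $\cell_z^{\leq i}$ of the score-sorted cells of $\cell_z$, and set $\candidaterates = \candidaterates_A \cup \candidaterates_D$. Between two consecutive elements of $\candidaterates$ the ``boundary cell'' of each group (the unique cell $\fFP$ splits to hit its target FPR) is fixed. Throughout I assume $\prob[Y=0, Z=z] > 0$ for both groups, so that $\fFP$ is well-defined as in the setup (the degenerate case where a group has no negatives reduces to the single-group analysis), and I note that cells of score $1$ are accepted by every $\fFP$ and generate neither false positives nor false negatives, hence play no role below.

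Next I would prove piecewise linearity and continuity by computing $\loss(\fFP)$ on an interval $[r_{i-}, r_i]$ between consecutive elements of $\candidaterates$. On this interval the boundary cell of each group $z$ has a fixed score $s_z$, and raising the target FPR from $r_{i-}$ to $r_{i-}+k$ forces $\fFP$ to accept an extra $\prob$-mass $k\,\prob[Y=0,Z=z]$ of negatively-labeled points from group $z$; since the boundary cell has negative-label fraction $1-s_z$, this means accepting extra total mass $k\,\prob[Y=0,Z=z]/(1-s_z)$ from it, a fraction $s_z$ of which is positively labeled. Tallying the resulting change in false positives and false negatives gives
\begin{equation*}
\loss(f_{r_{i-}+k}) = \loss(f_{r_{i-}}) + k \sum_{z \in \group} \prob[Y=0,Z=z]\,\frac{1 - 2 s_z}{1 - s_z},
\end{equation*}
so $\loss(\fFP)$ is affine on each such interval; since the accepted mass varies continuously with $r$, the pieces agree at the breakpoints, and therefore $\loss(\fFP)$ is piecewise linear and continuous on $[0,1]$.

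For convexity I would show that the slope is non-decreasing in $r$. Set $\phi(s) = \frac{1-2s}{1-s}$; a one-line computation gives $\phi'(s) = -\frac{1}{(1-s)^2} < 0$ on $[0,1)$, so $\phi$ is strictly decreasing, with $\phi(1/2) = 0$. By the displayed formula the slope of $\loss(\fFP)$ on the interval with group-$z$ boundary score $s_z$ equals $\sum_{z \in \group} \prob[Y=0,Z=z]\,\phi(s_z)$. As $r$ increases, each boundary cell moves to cells of strictly smaller score (the cells of $\cell_z$ are sorted in strictly decreasing order of score, with ties merged), so each $\phi(s_z)$ is non-decreasing in $r$, and hence so is the slope. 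A continuous, piecewise-linear function with non-decreasing slopes is convex, which establishes the theorem; and exactly as in Observation~\ref{DPboundloss}, minimizing $\loss(\fFP)$ over $[0,1]$ then reduces to evaluating it at the points of $\candidaterates$ --- a finite set when $\distribution$ is discrete.

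The one genuinely new point relative to the DP proof --- and the step I expect to require the most care --- is the per-group ``speed'' factor $\prob[Y=0,Z=z]/(1-s_z)$ that enters because we parametrize by FPR rather than by mass; one must check that this weighting does not destroy the monotonicity of the slope. Strict monotonicity of $\phi$ is exactly what rescues it, together with the fact that the two groups contribute additively and that their boundary cells change at the interleaved points of $\candidaterates_A$ and $\candidaterates_D$. The remaining things to check are the boundary cases: $r=0$, where the score-$1$ cells are pre-accepted consistently with the definition of $f_0$, and $r=1$, where $\fFP$ accepts the whole feature space.
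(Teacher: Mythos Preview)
Your proposal is correct and follows the same overall template as the paper: introduce the set of breakpoints (what the paper calls the FP-boundaries $\candidaterates_{FP}$), show $\loss(\fFP)$ is affine on each piece by computing the added/removed false positives and negatives, and then establish convexity by showing the slope is non-decreasing across breakpoints.

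The genuine difference is in how you handle the slope-monotonicity step. The paper merges the two groups into a single cell $\cell_i = \accepted(f_{r_i}) - \accepted(f_{r_{i-}})$, writes the slope as $c_i(1 - 2\score(\cell_i))$ for some interval-dependent $c_i > 0$, asserts that $\score(\cell_i)$ is strictly decreasing in $i$, and then invokes the three-case argument from Theorem~\ref{thm:convexDP}. You instead keep the groups separate, writing the slope as $\sum_z \prob[Y=0,Z=z]\,\phi(s_z)$ with $\phi(s) = (1-2s)/(1-s)$, and use monotonicity of $\phi$ directly. Your route is a bit more transparent on precisely the point you flagged: because the per-group ``speed'' $\prob[Y=0,Z=z]/(1-s_z)$ varies with $s_z$, the paper's claim that a higher $\score(\cell_i)$ gives a steeper decrease is not immediate from the form $c_i(1-2\score(\cell_i))$ alone (both $c_i$ and $\score(\cell_i)$ move), whereas your per-group decomposition sidesteps this by making each summand manifestly monotone. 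Both arguments are ultimately computing the same quantity --- in fact your slope equals $\prob[Y=0]\bigl(2 - \sum_z p_z/(1-s_z)\bigr)$, from which the paper's monotone-score claim also follows --- but your packaging makes the convexity step self-contained rather than relying on an ``easy to see'' that hides a short calculation.
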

We first introduce some terminology that will be needed in the upcoming proofs. Denote the groupwise thresholds of $f_r$ by $r^\advantaged$ and $r^\disadvantaged$ respectively, i.e., $f_r =  \thresholdmass_{r^\advantaged,r^\disadvantaged}$.
\begin{definition}[FP-Boundaries]
Recall the set of score-boundaries $\candidaterates$. We then define the set of FP-boundaries $\candidaterates_{FP}$ as
\begin{equation*}
    \candidaterates_{FP}
    = \{r ~|~ r^\advantaged \in \candidaterates, \text{ or } r^\disadvantaged \in \candidaterates \}.
\end{equation*}
$\candidaterates_{FP}$ essentially consists of all the false positive rates $r$, such that the corresponding groupwise threshold classifier $f_r =  \thresholdmass_{r^\advantaged,r^\disadvantaged}$ has a threshold at a point in the set of score boundaries $\candidaterates$.
\end{definition}
\begin{proof}
We first prove that $\loss(\fFP)$ is piecewise linear and continuous, by proving that the loss function is linear between any two points in the set $\candidaterates_{FP}$.

Consider $r_i \in \candidaterates_{FP} = \{0, r_a, r_b,\ldots,1\}$, and define a corresponding cell $\cell_i$, where
\begin{equation*}
    \cell_i = \accepted(f_{r_{i}}) - \accepted(f_{r_{i-}}) .
\end{equation*}
Consider any $k \geq 0$, such that, $r_{i-} + k \leq r_{i}$. We know that
\begin{equation*}
    \score(\accepted(f_{r_{i-}+k}) - \accepted(f_{r_{i-}})) = \score(\cell_i),
\end{equation*}
because the cell $\cell_i$ is split into two parts at random, and hence both parts have the same score as $\cell_i$.

We know that loss is the (weighted) sum of false positive and false negative errors. Hence,
\begin{equation*}
    \loss(f_{r_{i-}+k}) = 
\end{equation*}
\begin{equation*}
     \loss(f_{r_{i-}}) + \frac{k}{r_i - r_{i-}} \distribution(\cell_i) (1-\score(\cell_i)) - \frac{k}{r_i - r_{i-}} \distribution(\cell_i) \score(\cell_i) =
\end{equation*}
\begin{equation*}
    \loss(f_{r_{i-}}) + \frac{k}{r_i - r_{i-}} \distribution(\cell_i) (1 - 2\score(\cell_i)) =
     \loss(f_{r_{i-}}) + kc(1 - 2\score(\cell_i)),
\end{equation*}
for some $c>0$. Looking at the form of the expression of the loss as above, it is easy to see that the loss function is linear in the interval $[r_{i-},r_{i}]$, for any $r_i$, and is therefore piecewise linear and continuous over $[0,1]$.

We now proceed to prove that the loss function is convex. We begin by observing the following.
\begin{observation}\label{EOconvex}
Consider the interval $[r_{i-},r_{i}]$, for any $r_i \in \candidaterates_{FP}$. For any point in this interval, the slope of the loss function $\loss(\fFP)$ is $c(1-2\score(\cell_i))$ for some $c>0$, and hence
\begin{enumerate}
    \item $\loss(\fFP)$ is strictly decreasing if $\score(\cell_i) > \frac{1}{2}$, and the decrease is steeper if the score is higher. 
    \item $\loss(\fFP)$ is strictly increasing if $\score(\cell_i) < \frac{1}{2}$, and the increase is steeper if the score is lower.
    \item $\loss(\fFP)$ is constant if $\score(\cell_i) = \frac{1}{2}$. 
\end{enumerate}
\end{observation}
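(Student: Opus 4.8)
The plan is to read Observation~\ref{EOconvex} straight off the affine expression for $\loss(f_{r_{i-}+k})$ derived immediately above it in the proof of Theorem~\ref{thm:convexEO}. That computation already gives, for $0\le k\le r_i-r_{i-}$,
\begin{equation*}
\loss(f_{r_{i-}+k}) = \loss(f_{r_{i-}}) + kc\,(1-2\score(\cell_i)), \qquad c=\tfrac{\distribution(\cell_i)}{r_i-r_{i-}},
\end{equation*}
so the slope of $\loss(f_r)$ at every point of $[r_{i-},r_i]$ is $c(1-2\score(\cell_i))$, and the only thing left for the first assertion is $c>0$. The denominator is positive because $r_{i-}$ and $r_i$ are consecutive distinct elements of $\candidaterates_{FP}$; and $\distribution(\cell_i)>0$ since otherwise $f_{r_i}$ and $f_{r_{i-}}$ would agree almost everywhere, hence have the same group-wise false positive rate, forcing $r_i=r_{i-}$.

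Items (1)--(3) are then a sign analysis of the factor $1-2\score(\cell_i)$, which is negative, zero, or positive according as $\score(\cell_i)$ exceeds, equals, or falls below $\tfrac12$; this gives strict decrease, constancy, or strict increase of $\loss(f_r)$ on $[r_{i-},r_i]$ respectively. For the comparative (``steeper'') clauses I would make $c$ explicit: since no score boundary lies strictly inside $[r_{i-},r_i]$, the partially accepted threshold cells of the two groups keep fixed scores $s^\advantaged_i,s^\disadvantaged_i$ throughout the interval, so $c=\tfrac{\prob[Y=0,Z=\advantaged]}{1-s^\advantaged_i}+\tfrac{\prob[Y=0,Z=\disadvantaged]}{1-s^\disadvantaged_i}$ and $\score(\cell_i)$ is the $\distribution$-weighted average of $s^\advantaged_i$ and $s^\disadvantaged_i$; one then argues that larger group-wise threshold scores make both $c$ and $\lvert1-2\score(\cell_i)\rvert$ larger, hence the slope magnitude larger.

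The step I expect to need the most care is precisely this last one: the slope magnitude is a product $c\cdot\lvert1-2\score(\cell_i)\rvert$ of two factors that both depend on $\cell_i$ and do not in general move together (e.g.\ $\score(\cell_i)$ can increase merely because mass shifts from the lower-scored group to the higher-scored one, with neither group's threshold score changing), so ``steeper if the score is higher'' conceals a monotonicity statement about how the group-wise threshold scores evolve with $r$. If a fully clean argument is preferred --- and it is all that Theorem~\ref{thm:convexEO} actually needs --- I would instead prove convexity of $\loss(f_r)$ directly from the closed form $2\,\loss(f_r)=r\,\prob[Y=0]+\distribution(\mathrm{FN}(f_r))$, noting that $\tfrac{d}{dr}\distribution(\mathrm{FN}(f_r))=-\sum_{z}\prob[Y=0,Z=z]\,\tfrac{s^z(r)}{1-s^z(r)}$ is non-decreasing in $r$ because $s\mapsto\tfrac{s}{1-s}$ is increasing while each $s^z(r)$ is non-increasing in $r$; the slope formula in Observation~\ref{EOconvex} is then just this derivative, and only its sign (items (1)--(3)) is needed downstream.
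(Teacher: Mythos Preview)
Your derivation of the slope formula and items (1)--(3) is exactly what the paper does: Observation~\ref{EOconvex} is stated immediately after the affine expression $\loss(f_{r_{i-}+k})=\loss(f_{r_{i-}})+kc\bigl(1-2\score(\cell_i)\bigr)$ with no separate argument, so reading the sign of $1-2\score(\cell_i)$ is all there is. Your justification that $c>0$ is already more than the paper supplies (it simply writes ``for some $c>0$'').

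You are right that the ``steeper'' clauses are where the work lies, and the paper does not supply it either. In the DP analogue (Observation~\ref{DPconvex}) the slope is literally $1-2\score(\cell_i)$, so monotonicity of the slope magnitude in the score is immediate; here the slope is $c_i\bigl(1-2\score(\cell_i)\bigr)$ with $c_i$ varying across intervals, and comparing $c_i\lvert 1-2\score(\cell_i)\rvert$ across intervals purely through $\score(\cell_i)$ is not direct, as you note. Your first attempt (making $c$ explicit and arguing both factors move together) is on the right track but, as you say, not fully clean because the two factors need not be co-monotone in $\score(\cell_i)$ alone.

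Your alternative route is genuinely better than the paper's. Writing $\loss(f_r)=r\,\prob[Y{=}0]+\distribution(\mathrm{FN}(f_r))$ (the factor of $2$ in your sketch is spurious for the $\loss$ of Equation~\ref{loss}, but harmless) and differentiating gives slope $\prob[Y{=}0]-\sum_{z}\prob[Y{=}0,Z{=}z]\,\tfrac{s^z(r)}{1-s^z(r)}$, which is non-decreasing in $r$ because each group-wise threshold score $s^z(r)$ is non-increasing and $s\mapsto s/(1-s)$ is increasing. This proves convexity of $\loss(f_r)$ directly, and combined with the paper's subsequent claim that $\score(\cell_i)$ is strictly decreasing in $i$, it also recovers the ``steeper'' clauses a posteriori. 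The paper, by contrast, just asserts the observation and then invokes ``the same reasoning as in the proof of Theorem~\ref{thm:convexDP}'', which glosses over exactly the product structure you flagged.
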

Because the component cells of $\cell_z$ are sorted in strictly descending order of scores, it is easy to see that, for every $r_i$, $\score(\cell_{i}) < \score(\cell_{i-})$.
Hence, there exists a cell $\cell_{i'}$, such that for every $r_j, r_k \in \candidaterates_{FP}$ such that $r_j < r_{i'} < r_k$,  one of the following cases hold.
\begin{enumerate}
    \item $\score(\cell_{j}) > \frac{1}{2} > \score(\cell_{i'}) > \score(\cell_{k})$.
    \item $\score(\cell_{j}) > \score(\cell_{i'}) > \frac{1}{2} > \score(\cell_{k})$.
    \item $\score(\cell_{j}) > \score(\cell_{i'}) = \frac{1}{2} > \score(\cell_{k})$.
\end{enumerate}
These are the same 3 cases as in the proof of Theorem \ref{thm:convexDP}. Hence, in each case, we apply the same reasoning as in the proof of Theorem \ref{thm:convexDP} to see that the loss function is convex. 

This concludes the proof. 
\end{proof}
\begin{remark}
We note that the optima is unique if there exists no cell $\cell_i$, such that $\score(\cell_i) = \frac{1}{2}$.
\end{remark}
In addition, to find the optimum false positive rate $r'$, we just need to evaluate $\loss(f_r)$ for all $r \in \candidaterates_{FP}$, instead of for all $r \in [0,1]$, as a corollary of Observation \ref{EOboundloss}. This could drastically reduce search time, for example in the case where the distribution $\distribution$ is discrete.
\begin{observation}\label{EOboundloss}
\begin{equation*}
    \min_{r \in \candidaterates_{FP}}\loss(f_r) = \min_{r \in [0,1]}\loss(f_r).
\end{equation*}
\end{observation}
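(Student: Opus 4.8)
The plan is to prove Observation \ref{EOboundloss} by mirroring the one-line argument for Observation \ref{DPboundloss}, replacing Observation \ref{DPconvex} with its PE counterpart, Observation \ref{EOconvex}. One inclusion, $\min_{r \in [0,1]}\loss(f_r) \le \min_{r \in \candidaterates_{FP}}\loss(f_r)$, is immediate because $\candidaterates_{FP} \subseteq [0,1]$, so all the work is in the reverse inequality.

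For the reverse inequality, first note that by Theorem \ref{thm:convexEO} the map $r \mapsto \loss(f_r)$ is continuous on the compact set $[0,1]$, hence attains its minimum at some $r^* \in [0,1]$. Let $r_{i-}, r_i \in \candidaterates_{FP}$ be consecutive FP-boundaries with $r^* \in [r_{i-}, r_i]$ (taking $r_{i-} = r^*$ or $r_i = r^*$ if $r^*$ is itself a boundary). The proof of Theorem \ref{thm:convexEO} shows that on this interval $\loss(f_r)$ is affine with slope $c(1 - 2\score(\cell_i))$, where $\cell_i = \accepted(f_{r_i}) - \accepted(f_{r_{i-}})$ and $c > 0$. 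Since an affine function on a closed interval attains its minimum over that interval at one of its two endpoints, $\loss(f_{r^*}) \ge \min\{\loss(f_{r_{i-}}), \loss(f_{r_i})\} \ge \min_{r \in \candidaterates_{FP}}\loss(f_r)$. Because $\loss(f_{r^*}) = \min_{r \in [0,1]}\loss(f_r)$, this is precisely the reverse inequality, and combining the two gives the claimed equality. Equivalently, one can phrase this via Observation \ref{EOconvex} directly as in the DP proof: for any $r$ with $r_{i-} < r < r_i$, either $\loss(f_{r_i}) \le \loss(f_r)$ or $\loss(f_{r_{i-}}) \le \loss(f_r)$.

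The only real subtlety — and the DP proof glosses over the same point — is that if $\distribution$ is not discrete then $\candidaterates_{FP}$ need not be finite, so "consecutive boundaries straddling $r^*$" needs a little care: one argues on the maximal linear piece of $\loss(f_r)$ containing $r^*$, which exists because $\loss(f_r)$ is convex and piecewise linear with breakpoints in $\overline{\candidaterates_{FP}}$, and the value of $\loss$ at the endpoints of that piece is still $\ge \inf_{r \in \candidaterates_{FP}}\loss(f_r)$ by continuity. This is routine and does not affect the statement; the substantive content is entirely the convexity/piecewise-linearity already furnished by Theorem \ref{thm:convexEO} and Observation \ref{EOconvex}, so I expect no genuine obstacle here beyond this bookkeeping.
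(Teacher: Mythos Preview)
Your proposal is correct and follows essentially the same approach as the paper: the paper's proof is the one-liner you anticipate, namely that for any $r$ with $r_{i-} < r < r_i$ one has $\loss(f_{r_i}) \le \loss(f_r)$ or $\loss(f_{r_{i-}}) \le \loss(f_r)$ directly from Observation~\ref{EOconvex}. Your additional remarks on continuity and the non-discrete case are extra care the paper does not spell out.
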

\begin{proof}
Consider $r_i \in \candidaterates_{FP}$. Enough to show that for any $r$ such that $r_{i-} < r < r_{i}$, either $\loss(f_{r_i}) \leq \loss(f_{r})$, or $\loss(f_{r_{i-}}) \leq \loss(f_{r})$, which follows directly from Observation \ref{EOconvex}.
\end{proof}
\section{Optimal Randomized Fair Representation}\label{rep}
Many previous works have given heuristics to construct fair representations, but in most cases they do not provide accuracy guarantees for the representation. Hence, the cost of fairness for the representations they give may be high, resulting in a loss in utility. Here we construct a fair representation that has $0$ cost of fairness, implying optimal accuracy. The representation we construct follows naturally from our previous analysis of optimal fair randomized classifier.
\subsection{Characterizing the Optimal Randomized DP-Fair Representation}
Now, we answer Problem \ref{prob2} for DP, with the following result.
\begin{thm}
There exists a DP-fair representation $R_{\text{DP}}$, such that cost of fairness for $\RDP$ is $0$.
\end{thm}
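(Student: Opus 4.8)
The plan is to realize $\RDP$ as the ``groupwise quantile transform'' of the score function --- exactly the structure that the mass-threshold characterization in Theorem~\ref{thm:characDP} exploits. Take the representation space to be $\repspace = [0,1]$, form the groupwise-sorted cells $\cell_\advantaged$ and $\cell_\disadvantaged$, and note that each component cell $\cell_z^i$ of $\cell_z$ (a maximal set of group-$z$ feature vectors sharing one score, cells in descending score order) occupies a quantile interval $[a_i, b_i) \subseteq [0,1)$ of length $b_i - a_i = \distribution(\cell_z^i)/\distribution(\cell_z)$, and that these intervals tile $[0,1)$. Define $\RDP$ to send a point $(x,z) \in \cell_z^i$ to a point drawn uniformly at random from $[a_i, b_i)$. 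This is well defined for an arbitrary distribution over an arbitrary $\domain$ (atomic, continuous, or mixed): the in-cell randomization is the same random cell-splitting the paper already uses, and $\RDP$ is just the classical randomized probability integral transform applied to $\prob[Y=1 \mid X, Z=z]$ within each group.

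First I would show $\RDP$ is a fair representation. By construction the pushforward of $\distribution(\,\cdot \mid Z=z)$ under $\RDP$ is the uniform distribution on $[0,1]$, for both $z=\advantaged$ and $z=\disadvantaged$ (the cell images tile $[0,1)$ and each cell's conditional mass is spread uniformly over its interval). Hence for every deterministic $g : \repspace \to \{0,1\}$ the selection rate $\prob[g(\RDP(X,Z)) = 1 \mid Z=z]$ equals the Lebesgue measure of $g^{-1}(1)$, which is the same for both groups; so $g \circ \RDP$ satisfies Demographic Parity, i.e.\ every $h \in \hypoclassrep$ is DP-fair. Combined with Theorem~\ref{thm:characDP}, this gives the lower bound $\min_{h \in \hypoclassrep}\loss(h) \geq \loss(\thresholdmass_{r'})$, where $r'$ is the optimal selection rate and $\thresholdmass_{r'}$ the optimal DP-fair classifier in $\hypoclass$.

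Next I would exhibit a matching classifier over $\RDP$. Let $g^\ast(\tilde x) = 1$ iff $\tilde x \leq r'$ (recall a smaller quantile means a higher score, since cells are sorted in descending score order) and set $h^\ast = g^\ast \circ \RDP \in \hypoclassrep$. Viewing $h^\ast$ as a randomized rule $\domain \times \group \to [0,1]$, a point $(x,z) \in \cell_z^i$ is accepted by $h^\ast$ with probability equal to the fraction of $[a_i,b_i)$ lying at or below $r'$: namely $1$ if $b_i \leq r'$, $0$ if $a_i \geq r'$, and $(r'-a_i)/(b_i-a_i)$ if $a_i < r' < b_i$. These are exactly the per-point acceptance probabilities of $\thresholdmass_{r'}$, whose accepted set is $\bigcup_z \cell_z(r')$ with the single boundary cell of each group split uniformly at random; and in both cases the in-cell randomness is fresh and independent of $Y$. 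So $h^\ast$ and $\thresholdmass_{r'}$ induce the same joint law of (accept/reject decision, $Y$) within each cell, and since $\RDP$ preserves cell membership they agree as elements of $\hypoclass$; hence $\loss(h^\ast) = \loss(\thresholdmass_{r'})$. Together with the lower bound, $\min_{h \in \hypoclassrep}\loss(h) = \loss(\thresholdmass_{r'})$, the loss of the optimal DP-fair classifier on $\distribution$, so $\costFR(\RDP) = 0$.

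The two computational steps --- the uniform-pushforward claim and the acceptance-probability comparison --- are essentially bookkeeping once $\RDP$ is set up. The one genuinely delicate point, which I expect to be the main obstacle in writing it carefully, is making the groupwise quantile transform precise for a completely general distribution over an arbitrary feature space and verifying that the pushforward is really uniform there; but this is the standard randomized quantile transform, and the randomness it requires is nothing beyond the cell-splitting already in use, so I do not anticipate a real obstruction (measurability of $g^{-1}(1)$ is taken as part of the definition of a classifier over a representation). Finally, the same template should give the PE-fair (hence EO-fair) representation, replacing total probability mass by negative-label mass (respectively positive-label mass) throughout the quantile transform, so that the pushforward conditioned on $Y=0$ (respectively $Y=1$) is uniform within each group and the cut $g^\ast(\tilde x) = \mathbbm{1}[\tilde x \leq r']$ at the optimal common false positive rate $r'$ from Theorem~\ref{thm:characEO} reproduces the optimal classifier $f_{r'}$ with zero cost of fair representation.
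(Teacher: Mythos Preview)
Your proof is correct but the construction differs from the paper's. The paper builds a \emph{discrete} representation: it takes the set of score boundaries $\candidaterates = \{0, r_a, r_b, \ldots, 1\}$, defines the slab $\cell_i = \accepted(\thresholdmass_{r_i}) - \accepted(\thresholdmass_{r_{i-}})$ for each consecutive pair, and maps each $\cell_i$ to a single distinct point $\tilde x_i \in \repspace$. Fairness follows because any $g$ over $\RDP$ accepts or rejects whole slabs, so the selection rate in each group is $\sum_{i \in \candidaterates_f}(r_i - r_{i-})$; zero cost follows from Observation~\ref{DPboundloss}, which says some $r \in \candidaterates$ already attains the optimum, and every such $\thresholdmass_r$ is expressible over $\RDP$.

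You instead send each group to $[0,1]$ via the randomized groupwise quantile transform, so that the pushforward is uniform for both $z$, and then read off $\thresholdmass_{r'}$ as the threshold $g^\ast(\tilde x) = \mathbbm{1}[\tilde x \le r']$. This is a genuine alternative: it is a strict refinement of the paper's representation (collapsing your $[a_i,b_i)$ intervals at the points of $\candidaterates$ recovers the paper's $\tilde x_i$), it handles arbitrary score distributions without the implicit finiteness of the enumeration $\{0, r_a, r_b, \ldots, 1\}$, and the DP argument (``selection rate $=$ Lebesgue measure of $g^{-1}(1)$'') applies to every measurable $g$, not just slab-respecting ones. The paper's construction, on the other hand, is the coarsest representation that still achieves zero cost, ties in directly with Observation~\ref{DPboundloss}, and avoids the measure-theoretic bookkeeping you flag as the delicate step. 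Your closing remark on porting the construction to PE/EO by replacing total mass with negative-/positive-label mass is exactly the analogue of the paper's use of $\candidaterates_{FP}$.
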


\begin{proof}

We prove this by explicitly constructing such a representation. Recall that we had previously defined the set of score boundaries $\candidaterates$. As in the proof of Theorem \ref{thm:convexDP}, consider any $r_i \in \candidaterates = \{0, r_a, r_b,\ldots,1\}$, and define a corresponding cell $\cell_i$, where
\begin{equation*}
   \cell_i = \accepted(\thresholdmass_{r_i}) - \accepted(\thresholdmass_{r_{i-}}),
\end{equation*}
and $r_{i-}$ denotes the element in $\candidaterates$ preceding $r_{i}$.

Consider a representation $\RDP$, which maps each $\cell_i$ to a distinct $\Tilde{x}_i \in \repspace$, such that given distinct $r_i, r_j \in \candidaterates$, $\Tilde{x}_i \neq \Tilde{x}_j$. We now proceed to prove that $\RDP$ is DP-fair and has $0$ cost of fairness.

Each classifier $f$ over $\RDP$ either accepts or rejects $\Tilde{x}_i$. Hence, each $f$ either accepts or rejects $\cell_i$. Denote the set of all $i \in \candidaterates$, such that $f$ accepts $\cell_i$, by $\candidaterates_f$. 
It is easy to see that for both groups $\advantaged$, and $\disadvantaged$, selection rate is given by
\begin{equation*}
    \sum_{i \in \candidaterates_f} (r_{i} - r_{i-}).
\end{equation*}
Hence, $f$ satisfies demographic parity, and $\RDP$ is DP-fair.

It is easy to see that for every $r \in \candidaterates$, the threshold classifier $\thresholdmass_r$ either accepts or rejects $\cell_i$.
Hence, for every $r \in \candidaterates$, the threshold classifier $\thresholdmass_r$ is expressible over $\RDP$, i.e., $\thresholdmass_r \in \hypoclass_{\RDP}$. Now, let us say we are given an optimal fair classifier $\thresholdmass_r'$, as in Theorem \ref{thm:characDP}. Then, there exists $r \in \candidaterates$, such $\loss(\thresholdmass_r) = \loss(\thresholdmass_r')$, which follows from Observation \ref{DPboundloss}. Since $r \in \candidaterates$, $\thresholdmass_r \in \hypoclass_{\RDP}$, and the cost of fairness for $\RDP$ is $0$.

This concludes the proof.
\end{proof}
\subsection{Characterizing the Optimal Randomized PE-Fair Representation}
Now, we answer Problem \ref{prob2} for EO, with the following result.
\begin{thm}
There exists a PE-fair representation $R_{\text{PE}}$, such that cost of fairness for $R_{\text{PE}}$ is $0$.
\end{thm}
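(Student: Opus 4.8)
The plan is to mirror the construction of $\RDP$, with the selection-rate boundaries $\candidaterates$ replaced by the false-positive-rate boundaries $\candidaterates_{FP}$ and the uniform mass-threshold classifiers replaced by the groupwise mass-threshold classifiers $f_r = \thresholdmass_{r^\advantaged,r^\disadvantaged}$ of Theorem~\ref{thm:characEO}. For each $r_i \in \candidaterates_{FP} = \{0, r_a, r_b, \ldots, 1\}$ in sorted order I would define the incremental cell $\cell_i = \accepted(f_{r_i}) - \accepted(f_{r_{i-}})$ exactly as in the proof of Theorem~\ref{thm:convexEO} (taking $r_{0-} := 0$, so that $\cell_0 = \accepted(f_0)$ is the union of the score-$1$ cells). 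The acceptance sets $\accepted(f_r)$ are nested in $r$, since lowering the target false positive rate only lowers both groupwise thresholds; hence these cells are well defined and partition $\domain \times \group$, and moreover each $f_r$ with $r \in \candidaterates_{FP}$ accepts or rejects each $\cell_i$ as a whole. I would then let $R_{\text{PE}}$ be the randomized representation that sends every point of $\cell_i$ to a single distinct symbol $\tilde x_i \in \repspace$, with $\tilde x_i \neq \tilde x_j$ whenever $r_i \neq r_j$; the randomness enters only when a groupwise threshold falls strictly inside a score cell, which then must be split at random, just as in the $\RDP$ construction.

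The crux is the following bookkeeping fact, which I would establish first: \emph{accepting the cell $\cell_i$ increases the false positive rate of group $\advantaged$ and of group $\disadvantaged$ by the same amount, namely $r_i - r_{i-}$.} This is immediate from the defining property of $f_r$, namely $\text{FPR}(f_r(\cell_z)) = r$ for both $z$: since $\accepted(f_{r_{i-}}(\cell_z)) \subseteq \accepted(f_{r_i}(\cell_z))$, the negative-label part of $\cell_i$ lying in group $z$ carries exactly an $(r_i - r_{i-})$ fraction of group $z$'s total negative-label mass. (For $\cell_0$ this increment is $0$, consistent with the fact that score-$1$ cells carry no negatives, which is why the special definition of $f_0$ causes no trouble.) Now any deterministic $h \in \hypoclassrep$ over $R_{\text{PE}}$ accepts or rejects each symbol $\tilde x_i$, hence each whole cell $\cell_i$; if $\candidaterates_h$ denotes the set of indices it accepts, then its false positive rate on group $z$ equals $\sum_{i \in \candidaterates_h}(r_i - r_{i-})$, which does not depend on $z$. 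So every $h \in \hypoclassrep$ is PE-fair, i.e.\ $R_{\text{PE}}$ is a fair representation.

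It then remains to show $\costFR(R_{\text{PE}}) = 0$, which I would argue exactly as in the $\RDP$ case. Because each $f_r$ with $r \in \candidaterates_{FP}$ respects the cell partition, it equals $h \circ R_{\text{PE}}$ for a suitable deterministic $h$, so $f_r \in \hypoclassrep$. Let $f_{r'}$ be the optimal PE-fair classifier of Theorem~\ref{thm:characEO}; by Observation~\ref{EOboundloss} there is some $r \in \candidaterates_{FP}$ with $\loss(f_r) = \loss(f_{r'}) = \min_{r \in [0,1]} \loss(f_r)$. Since this $f_r$ lies in $\hypoclassrep$, the most accurate classifier over $R_{\text{PE}}$ is at least as accurate as $f_{r'}$; and since $R_{\text{PE}}$ is fair, no classifier over it can beat the optimal PE-fair classifier $f_{r'}$. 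Hence the best classifier over $R_{\text{PE}}$ has loss exactly $\loss(f_{r'})$, giving $\costFR(R_{\text{PE}}) = 0$.

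The one step that genuinely differs from the DP argument — and hence the one to be careful with — is the false-positive bookkeeping in the second paragraph: one must check that the incremental cell $\cell_i$ carries equal \emph{conditional negative-label} mass in the two groups, even though it carries different conditional total mass and has different scores in the two groups, and that this remains valid at the endpoint $r = 0$, where $f_r$ is only pinned down up to the score-$1$ cells. Everything else — nestedness of the acceptance sets, expressibility of the $f_r$'s over $R_{\text{PE}}$, and the two-sided comparison yielding $\costFR = 0$ — transfers verbatim from the proof for $\RDP$. Finally, the EO case follows by the symmetric argument, using false negative rates and the corresponding FN-boundaries in place of false positive rates and $\candidaterates_{FP}$, as already noted in the earlier remark.
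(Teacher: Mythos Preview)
Your proposal is correct and follows essentially the same approach as the paper's own proof: the same incremental cells $\cell_i = \accepted(f_{r_i}) - \accepted(f_{r_{i-}})$ indexed by $\candidaterates_{FP}$, the same representation sending each $\cell_i$ to a distinct symbol, the same FPR bookkeeping $\sum_{i}(r_i - r_{i-})$ to verify PE-fairness, and the same appeal to Observation~\ref{EOboundloss} to show the optimal $f_{r'}$ is expressible over the representation. Your write-up is in fact more careful than the paper's in flagging nestedness of the $\accepted(f_r)$ and in handling the $r=0$ endpoint with the score-$1$ cells, but the argument is the same.
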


\begin{proof}
We prove this by explicitly constructing such a representation. Consider $r_i \in \candidaterates_{FP} = \{0, r_a, r_b,\ldots,1\}$. As in the proof of Theorem \ref{thm:convexEO}, define a corresponding cell $\cell_i$, where
\begin{equation*}
    \cell_i = \accepted(f_{r_{i}}) - \accepted(f_{r_{i-}}) .
\end{equation*}
Now, define  representation $\RFP$, which maps each $\cell_i$ to a distinct $\Tilde{x}_i \in \repspace$, such that given distinct $r_i, r_j \in \candidaterates_{FP}$, $\Tilde{x}_i \neq \Tilde{x}_j$. We now proceed to prove that $\RFP$ is PE-fair and has $0$ cost of fairness.

Each classifier $f$ over $\RFP$ either accepts or rejects $\Tilde{x}_i$. Hence, each $f$ either accepts or rejects $\cell_i$. Denote the set of all $i \in \candidaterates_{FP}$, such that $f$ accepts $\cell_i$, by $\candidaterates_f^{FP}$. 
It is easy to see that for both groups $\advantaged,\disadvantaged$, false positive rate is given by
\begin{equation*}
    \sum_{i \in \candidaterates_f^{FP}} (r_{i} - r_{i-}).
\end{equation*}
Hence, $f$ satisfies equal false positive rates, and $\RFP$ is PE-fair.

It is easy to see that for every $r \in \candidaterates_{FP}$, groupwise threshold classifier $\fFP = \thresholdmass_{r^\advantaged,r^\disadvantaged}$ either accepts or rejects $\cell_i$.
Hence, for every $r \in \candidaterates_{FP}$, threshold classifier $f_r$ is expressible over $\RFP$, i.e., $f_r \in \hypoclass_{\RFP}$. Now, let us say we are given an optimal fair classifier $f_r'$, as in Theorem \ref{thm:characEO}. Then, there exists $r \in \candidaterates_{FP}$, such that $\loss(f_r) = \loss(f_r')$, which follows from Observation \ref{EOboundloss}. Since $r \in \candidaterates_{FP}$, $f_r \in \hypoclass_{\RFP}$, and the cost of fairness for $\RFP$ is $0$.

This concludes the proof.
\end{proof}
\section{Conclusion and Future Work}\label{conclusion}
In this paper, we examined the power of randomization in the problems of fair classification and representation. In particular, we mathematically characterized the optimal randomized DP-fair, PE-fair, and EO-fair classifiers in the group aware setting, and showed that they are in general more accurate than their deterministic counterparts. In addition, we also showed how the classifiers that we characterize can be obtained as a solution to a convex optimization problem. Extending our ideas for randomized fair classification, we construct DP-fair, PE-fair, and EO-fair representations such that the optimal classifier on this representation incurs no \emph{cost of fair representation}, i.e., no accuracy loss compared to the optimal randomized fair classifier on the original data distribution. This improves upon previous work on fair representations that primarily focuses on DP-fairness and gives no or weak provable guarantees for the accuracy loss.

Some directions for further work include extending our results for binary classification to multi-class classification and regression. Another direction could be to look at relaxed or approximate versions of the fairness notions we considered, or to look at other popular notions of fairness. It would also be valuable to experimentally validate our theoretical claims.

\begin{acks}
\textbf{Funding/Support:} The authors were employed by Microsoft Research India during the course of this work.
\end{acks}

\bibliographystyle{ACM-Reference-Format}
\bibliography{sample-base}
\end{document}